\newcommand\citealt[1]{\citeauthor{#1} \citeyear{#1}}
\newcommand\ie{i.\,e.}
\newcommand\eg{e.\,g.}
\newcommand\st{s.\,t.}
\newcommand\wrt{w.\,r.\,t.}
\newcommand\Algorithm[1]{\texttt{\footnotesize #1}}
\newcommand\Dataset[1]{\texttt{\footnotesize #1}}
\theoremstyle{definition}
\newtheorem{definition}{Definition}
\theoremstyle{plain}
\newtheorem{theorem}{Theorem}
\newtheorem{lemma}{Lemma}
\newtheorem{proposition}{Proposition}
\DeclareMathOperator*\argmin{arg\,min}
\DeclareMathOperator*\argmax{arg\,max}
\DeclareMathOperator*\nb{Nb}
\newcommand\R{\mathbb R}
\newcommand\BR{\mathbb R}
\newcommand\SG{\mathcal G}
\newcommand\SA{\mathcal A}
\newcommand\SB{\mathcal B}
\newcommand\SAB{\mathcal{AB}}
\newcommand\SV{\mathcal V}
\newcommand\SE{\mathcal E}
\newcommand\SC{\mathcal C}
\newcommand\SX{\mathcal X}
\newcommand\SI{\mathcal I}
\newcommand\SAC{\mathcal S}
\newcommand\DA{{\partial A}}
\newcommand\TRWS{\mbox{TRW-S}}
\newcommand\Paragraph[1]{\addvspace{2pt}\noindent\textbf{#1}\hskip 4pt plus 4pt minus 2pt\relax}
\newcolumntype{L}[1]{>{\RaggedRight}p{#1}}
\newcolumntype{R}[1]{>{\RaggedLeft}p{#1}}
\title{Exact MAP-Inference by Confining Combinatorial Search with LP Relaxation}
\author{%
  Stefan Haller\footnotemark[2],
  Paul Swoboda\footnotemark[3],
  Bogdan Savchynskyy\footnotemark[2]\\
  \footnotemark[2] University of Heidelberg,\hspace{1em}%
  \footnotemark[3] IST Austria\\
  \texttt{\small stefan.haller@iwr.uni-heidelberg.de}%
}
\begin{document}

\maketitle

\begin{abstract}
We consider the MAP-inference problem for graphical models, which is a valued constraint satisfaction problem defined on real numbers with a natural summation operation.
We propose a family of relaxations~(different from the famous Sherali-Adams hierarchy), which naturally define lower bounds for its optimum.
This family always contains a tight relaxation and we give an algorithm able to find it and therefore, solve the initial non-relaxed NP-hard problem.

The relaxations we consider decompose the original problem into two non-overlapping parts: an easy LP-tight part and a difficult one.
For the latter part a combinatorial solver must be used.
As we show in our experiments, in a number of applications the second, difficult part constitutes only a small fraction of the whole problem.
This property allows to significantly reduce the computational time of the combinatorial solver and therefore solve problems which were out of reach before.
\end{abstract}

\section{Introduction}

This paper focuses on \emph{energy minimization} or \emph{maximum a posteriori~(MAP) inference} for undirected graphical models. This problem is closely related to weighted and valued constraint satisfaction.
In the most common pairwise case it amounts to minimizing a partially separable function $E_{\SG}$ taking real values on a discrete set of finite-valued vectors~$\SX_{\SV}$\footnote{We rigorously define notation in Section ``Preliminaries''.}
\begin{equation}
  \label{equ:energy-min}
  \min_{x\in\SX_{\SV}} \left[ E_{\SG}(\theta,x):= \sum_{\mathclap{u \in \SV}} \theta_u(x_u) + \sum_{\mathclap{uv \in \SE}} \theta_{uv}(x_u, x_v) \right].
\end{equation}

The problem is known to be NP-hard~(\eg\ \citealt{li2016complexity}), and therefore a number of approximate algorithms were proposed to this end.
In contrast, our goal is an efficient method able to solve large-scale, but mostly simple problem instances \emph{exactly}.
Such instances typically arise in computer vision, machine learning and other areas of artificial intelligence.
Although approximate methods often provide reasonable solutions, having an exact solver can be quite critical at the modeling stage, when one has to differentiate between modeling and optimization errors.
In this case one usually resorts to either specialized combinatorial solvers~(see references in~\citealt{kappes2015comparative}; \citealt{hurley2016multi}) or off-the-shelf integer linear program~(ILP) solvers like CPLEX~\cite{cplex} or Gurobi~\cite{gurobi}.
However, neither specialized nor off-the-shelf solvers scale well, as the problem instances get larger.
Our method is able to use the fact that a linear program~(LP) relaxation of the problem is ``almost'' tight, \ie\ the obtained solution is close to the optimal one.
It restricts application of an exact solver to a small fraction of the problem, where the LP relaxation is not tight and yet obtains a provably optimal solution to the whole problem.
This allows to solve problems for which no efficient solving technique was available.

\Paragraph{Related work}
\emph{LP relaxations} are an important building block for a number of algorithms addressing the MAP-inference problem~\eqref{equ:energy-min}.
It was probably first considered in~(\citealt{shlezinger1976syntactic}; see~\citealt{werner2007linear} for the recent review) both in its primal and dual form.
The notion of \emph{reparametrization}~(known also as \emph{equivalent transformations} or \emph{equivalence preserving transformations}) was introduced in the same work as well.
Although the bound provided by the LP relaxation is often good, the class of problems, where it is tight, is limited~(see~\citealt{kolmogorov2015power}).
Practically important problems from this class are mainly those having acyclic structure or submodular costs.
Therefore a number of works were devoted to \emph{cutting plane} techniques to tighten the relaxation~(\eg\ \citealt{koster1998partial}; \citealt{sontag2007cutting}; \citealt{degivry2017clique}).
Sometimes the tightening itself may lead to an exact solution, however, in general it is accomplished with \emph{branch-and-bound} or $A^*$ algorithms.
The most prominent representative of the first class are the DAOOPT~\cite{marinescu2005and,otten2010toward} and Toulbar2~\cite{cooper2010soft} solvers.
The latter has recently shown impressive results on a number of benchmarks~\cite{hurley2016multi}.
In contrast, the $A^*$ algorithm so far was mainly used in specific applications~(\eg\ \citealt{bergtholdt2010study}).

Recently developed LP-relaxation-based \emph{partial optimality methods}~(\eg\ \citealt{shekhovtsov2014maximum}; \citealt{shekhovtsov2015maximum}; \citealt{Swoboda2016}) can find optimal labels for a significant part of variables without solving the combinatorial problem~\eqref{equ:energy-min}.
Afterwards, a combinatorial solver can be applied to the rest of the variables to obtain a complete solution.
These methods work well if the pairwise costs $\theta_{uv}$ play the role of a ``smoothing regularizer'' by slightly penalizing differences in values in neighboring variables $u$ and $v$.
However, they struggle as the pairwise costs get more weight and move towards ``hard constraints'', when some pairs of variable values are strongly penalized or even forbidden.

The \emph{CombiLP} method~\cite{savchynskyy2013global} is the closest to our work.
It iteratively splits the problem into a ``simple'' and a ``difficult'' part based on consistency of reparametrized unary and pairwise costs, known as~(virtual) arc-consistency~(see~\citealt{werner2007linear}), and checks for agreement of their solutions.
The ``simple'' part is addressed with a dedicated LP solver, whereas the ``difficult'' one is solved with an ILP method.
Although CombiLP has shown promising results on the OpenGM benchmark~\cite{kappes2015comparative}, its usage is beneficial for sparse graphical models only, when~$|\SE|\ll |\SV|^2$.

\Paragraph{Contribution}
Based on CombiLP, we propose a method, which is not restricted to sparse models.
Similar to CombiLP, we split the problem into LP and ILP parts based on local consistency properties.
Our new consistency criterion guarantees that the concatenation of the obtained LP and ILP solutions is optimal for the whole problem, given that the criterion is satisfied.
When the criterion is not satisfied, we increase the ILP subproblem and correspondingly decrease the LP one, like it is done in CombiLP.
There are several crucial differences to the CombiLP approach, however:
\begin{itemize}[nosep]
\item Our ``difficult'' ILP subproblem is kept much more compact, which is critical for densely-connected graphs.
This leads to substantial computational savings.
\item Our optimality criterion is stronger than those of CombiLP: Satisfaction of CombiLP's criterion for a given splitting implies satisfaction of ours.
\end{itemize}

Additionally, we treat the problem of an initial reparametrization suitable for the used splitting criterion and propose a method, which allows to use \emph{arbitrary} dual LP solvers within our algorithm, whereas the CombiLP implementation has a \emph{fixed} dedicated LP solver.
This allowed us to choose a more efficient LP solver and to significantly~(up to 18 times) speed up the original CombiLP implementation.

Finally, our criterion and implementation\footnote{Code is available at \texttt{github.com/fgrsnau/combilp}.} are also able to deal with higher order models, which intrinsically have a higher connectivity.
We show efficacy of our method on publicly available benchmarks from computer vision, machine learning and bio-imaging.

\section{Preliminaries}\label{sec:preliminaries}

\Paragraph{Graphical Models and MAP-inference}
Let $\mathcal G = (\SV, \SE)$ be an undirected graph with \emph{the set of nodes} $\SV$ and \emph{the set of edges} $\SE$.
The neighborhood of $u\in\SV$ is defined as $\nb(u): = \{ v \in \SV \mid uv \in \SE \}$.
Each node $v \in \SV$ is associated with a finite \emph{set of labels} $\SX_v$.
For any subset $\SV'\subseteq\SV$ of graph nodes the Cartesian product $\mathcal X_{\SV'} = \prod_{v \in \SV'} \mathcal X_v$ defines \emph{the set of labelings} of the subset $\SV' \subseteq \SV$, when each node from~$\SV'$ is assigned a label.
This includes also the special cases $\SV'=\{u,v\}\in\SE$ and $\SV'=\SV$ denoted as $\SX_{uv}$ and $\SX_{\SV}$ respectively.
We assume that $\argmin$ stands for the set of minimal elements. At the same time, when used with~``$=$''~or~``$:=$'' operators, it returns some element from this set.

Let ${\SI=\{ s\in\SX_u \mid u\in\SV\}\cup\{ (s,t)\in\SX_{uv} \mid uv\in\SE\}}$ be the set of indices enumerating all labels and label pairs in neighboring graph nodes.
For each node and edge the \emph{cost} functions $\theta_u : \SX_u \to \R$, $u\in\SV$ and $\theta_{uv} : \SX_{uv} \to \BR$, $uv\in\SE$ assign a cost to a label or label pair respectively.
The vector $\theta\in\BR^{\SI}$ contains all values of the functions $\theta_u$ and $\theta_{uv}$ as its coordinates.

\Paragraph{ILP formulation and LP relaxation}
One way to address the MAP-inference problem~\eqref{equ:energy-min} is to consider its ILP formulation~(see \eg~\citealt{shlezinger1976syntactic}; \citealt{werner2007linear})
\begin{align}
 &\min\nolimits_{\mu\ge 0}\langle\theta,\mu\rangle\nonumber\\
 &\sum\nolimits_{s\in\SX_u}\mu_u(s)=1,\ u\in\SV \label{equ:LP-formulation} \\
 &\sum\nolimits_{s\in\SX_u}\mu_{uv}(s,t)=\mu_v(t),\ uv\in\SE, t\in\SX_v \nonumber\\
 & \mu\in\{0,1\}^{\SI}\label{equ:integrality-constraints}
\end{align}
A natural LP relaxation is obtained by omitting the integrality constraints~\eqref{equ:integrality-constraints}.
The resulting LP~\eqref{equ:LP-formulation} is known as a \emph{local polytope}~\cite{werner2007linear} or simply \emph{an LP relaxation} of~\eqref{equ:energy-min}.
We will call the problem~\eqref{equ:energy-min} LP-tight, if the optimal values of~\eqref{equ:energy-min} and its LP relaxation~\eqref{equ:LP-formulation} coincide.
This also implies that there is an integer solution to the relaxed problem~\eqref{equ:LP-formulation}.
We will say that the LP relaxation {\em has an integer solution} in a node $u$ if there is $s\in\SX_u$ such that $\mu_u(s)=1$.
Due to constraints of~\eqref{equ:LP-formulation} it implies that $\mu_u(s')=0$ for $s'\in\SX_u\setminus \{s\}$.

Linear programs of the form~\eqref{equ:LP-formulation} are as difficult as linear programs in general~\cite{prusa2013universality} and therefore obtaining exact solutions for large-scale instances may require significant time.
However, there are fast specialized solvers~(\eg~\citealt{kolmorogrov2006convergent}; \citealt{cooper2008virtual}) returning approximate dual solutions of~\eqref{equ:LP-formulation}.

\Paragraph{Partial Optimality Observation}
Practical importance of the LP relaxation~\eqref{equ:LP-formulation} is based on the fact that often most coordinates of its~(approximate) relaxed solution are assigned integer values.
The non-integer coordinates can be rounded~\cite{ravikumar2010message} and the resulting labeling can be used as if it was a solution of the non-relaxed problem.
A number of problems have been successfully addressed with this type of methods~\cite{kappes2015comparative}.
However, apart from special cases~(\eg\ \citealt{boros2002pseudo}; \citealt{rother2007optimizing}) there is no guarantee that the integer coordinates keep their values in an optimal solution of the non-relaxed problem.

Even though there is no guarantee that the rounded integer solution is a sensible approximation for the optimal solution, empirical tests have shown that usually many integer coordinates coincide with the ones found in the optimal solution.
This is a purely practical observation with little theoretical background.
Nevertheless, this observation can be used to address the non-relaxed problem efficiently and it is a basis of our method.
An alternative, the \emph{partial optimality} approach was pursued by \eg\ \citealt{shekhovtsov2015maximum}; \citealt{Swoboda2016}.
We will provide a corresponding empirical comparison later in the paper.

\section{Idea of the Algorithm}

\Paragraph{Graph partition}
A subgraph $\SG'=(\SV',\SE')$ of the graph $\SG=(\SV,\SE)$ is called \emph{induced} by the set of its nodes $\SV'$, if $\SE'=\{uv\in\SE \mid u,v \in\SV'\}$, \ie\ the set $\SE'$ of its edges contains all edges from $\SE$ connecting nodes from $\SV'$.

The subgraphs $\SA=(\SV_{\SA},\SE_{\SA})$ and $\SB=(\SV_{\SB},\SE_{\SB})$ are called \emph{partition} of the graph $\SG$, if $\SV_{\SA}\cup\SV_{\SB}=\SV$, $ \SV_{\SA}\cap\SV_{\SB}=\emptyset$ and $\SA$ and $\SB$ are induced by $\SV_{\SA}$ and $\SV_{\SB}$ respectively.
The subgraph $\SB$ as complement to $\SA$ will be denoted as $\SG\setminus \SA$.
The other way around, $\SG\setminus \SB$ stands for $\SA$, if $\SA,\SB$ is a partition of $\SG$.
Notation~$\SE_{\SAB}$ will be used for the set of edges connecting~$\SV_{\SA}$ and~$\SV_{\SB}$: $\SE_{\SAB}=\{uv\in\SE \mid u\in \SV_{\SA}, v\in\SV_{\SB}\}$.

In the following, we will show how to partition the problem graph $\SG$ into (i)~an easy part with subgraph $\SA$, which can be solved exactly with approximate LP solvers and (ii)~a difficult part with subgraph $\SB$, which will require an ILP solver.

\Paragraph{Lower bound induced by partition}
Till the end of this section we will assume $\SA$, $\SB$ are a partition of a graph $\SG$.
For the sake of notation, when considering different subgraphs of~$\SG$ we will nevertheless use a cost vector $\theta$ corresponding to the master graph $\SG$, \ie\ $E_{\SA}(\theta,x)$ will stand for
$\sum_{u\in\SV_{\SA}}\theta_u(x_u)+ \sum_{uv\in\SE_{\SA}}\theta_{uv}(x_u,x_v)$, 
where $\theta\in\BR^{\SI}$.

Additionally, for $x'\in\SX_\SA$ and $x''\in\SX_\SB$, their concatenation $x' \circ x'' \in\SX_\SG$ will be defined as
\begin{equation}
  \label{equ:concatenation}
  (x' \circ x'')_v =
    \begin{cases}
      x'_v\,,  & v \in \SV_\SA\;,\\
      x''_v\,, & v \in \SV_\SB\;.
    \end{cases}
\end{equation}

Note that the energy function $E_\SG$ can be decomposed into subproblems on $\SA$ and $\SB$ and it holds
\begin{equation}\label{equ:energy-decomposition}
  E_\SG(\theta,x' \circ x'') = E_\SA(\theta,x') + E_\SB(\theta,x'') + \sum_{\mathclap{uv\in\SE_\SAB}}\theta_{uv}(x'_u, x''_v)
\end{equation}
and therefore,
\begin{multline}\label{equ:lower-bound}
  E_\SG(\theta,x' \circ x'') \ge E_{\SA}(\theta,x') + E_{\SB}(\theta,x'') +{} \\ {}+ \sum_{uv\in\SE_\SAB}\min_{(s,t)\in\SX_{uv}} \theta_{uv}(s, t)
\end{multline}
constitutes a lower bound for the energy function $E_{\SG}$.

\begin{proposition}[Sufficient optimality condition]\label{prop:lower-bound}
  The lower bound specified in~\eqref{equ:lower-bound} is tight if for all $uv\in\SE_\SAB$ it holds that $(x'_u,x''_v)\in\argmin_{(s, t)\in\SX_{uv}} \theta_{uv}(s, t)$, where ${x'\in\argmin_{x\in\SX_\SA}E_{\SA}(\theta,x)}$, ${x''\in\argmin_{x\in\SX_\SB}E_{\SB}(\theta,x)}$.
\end{proposition}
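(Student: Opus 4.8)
The plan is to read the right-hand side of~\eqref{equ:lower-bound}, evaluated at the subproblem minimizers $x'$ and $x''$, as a single quantity
\[
  B:=E_\SA(\theta,x')+E_\SB(\theta,x'')+\sum_{uv\in\SE_\SAB}\min_{(s,t)\in\SX_{uv}}\theta_{uv}(s,t),
\]
and to show that $B$ is simultaneously (i)~a lower bound for the global minimum $\min_{x\in\SX_\SG}E_\SG(\theta,x)$ and (ii)~attained by the concatenation $x'\circ x''$ under the stated condition. Tightness then follows by sandwiching the optimum between the two.

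First I would verify that $B$ bounds $E_\SG(\theta,z)$ from below for \emph{every} labeling $z\in\SX_\SG$, not merely for the fixed pair appearing in~\eqref{equ:lower-bound}. Since $\SA,\SB$ partition $\SG$, every $z\in\SX_\SG$ decomposes uniquely as $z=y'\circ y''$ with $y'\in\SX_\SA$ and $y''\in\SX_\SB$, so I can apply the decomposition~\eqref{equ:energy-decomposition} to $z$. Because $x'\in\argmin_{x\in\SX_\SA}E_\SA(\theta,x)$ we have $E_\SA(\theta,y')\ge E_\SA(\theta,x')$, analogously $E_\SB(\theta,y'')\ge E_\SB(\theta,x'')$, and each cross term satisfies $\theta_{uv}(y'_u,y''_v)\ge\min_{(s,t)\in\SX_{uv}}\theta_{uv}(s,t)$. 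Summing these three groups of independent inequalities yields $E_\SG(\theta,z)\ge B$, and taking the minimum over $z$ gives $\min_{x\in\SX_\SG}E_\SG(\theta,x)\ge B$.

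Next I would evaluate $E_\SG$ at the concatenation $x'\circ x''$ via~\eqref{equ:energy-decomposition}. The hypothesis $(x'_u,x''_v)\in\argmin_{(s,t)\in\SX_{uv}}\theta_{uv}(s,t)$ for every $uv\in\SE_\SAB$ says precisely that $\theta_{uv}(x'_u,x''_v)=\min_{(s,t)\in\SX_{uv}}\theta_{uv}(s,t)$, so the cross-edge sum in the decomposition collapses onto the minimized cross-edge sum in $B$. The $\SA$- and $\SB$-terms already coincide with those in $B$ by the choice of $x',x''$, whence $E_\SG(\theta,x'\circ x'')=B$.

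Combining the two steps produces $B\le\min_{x\in\SX_\SG}E_\SG(\theta,x)\le E_\SG(\theta,x'\circ x'')=B$, forcing all inequalities to be equalities; hence the lower bound is tight and $x'\circ x''$ is a global minimizer. There is no genuinely difficult step: the proof is direct bookkeeping on~\eqref{equ:energy-decomposition}. The one point requiring care is to interpret~\eqref{equ:lower-bound} as a bound over the whole labeling space rather than for a single pair $(x',x'')$, which the partition structure and the separate minimization of the three blocks ($\SA$, $\SB$, and the cross edges $\SE_\SAB$) legitimize.
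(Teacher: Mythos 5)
Your proof is correct and takes essentially the same route as the paper: the key step in both is observing that the hypothesis forces every cross-edge term $\theta_{uv}(x'_u,x''_v)$ to equal $\min_{(s,t)\in\SX_{uv}}\theta_{uv}(s,t)$, so the right-hand sides of~\eqref{equ:energy-decomposition} and~\eqref{equ:lower-bound} coincide at $x'\circ x''$. The only difference is that you explicitly verify the sandwich step (that $B$ lower-bounds $E_\SG(\theta,z)$ for \emph{every} labeling $z$), which the paper treats as already established by the discussion preceding the proposition.
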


It is trivial to show that the labeling $x'\circ x''$ is optimal for $E_\SG$ if the lower bound~\eqref{equ:lower-bound} is tight.

When considering the set of all possible partitions of $\SG$ into $\SA$ and $\SB$ there is always at least one that leads to a tight lower bound~\eqref{equ:lower-bound}.
It corresponds to a trivial partition, where either the subgraph $\SA$ or $\SB$ is empty.
The first case corresponds to solving the whole problem with an ILP method, whereas the second one corresponds to the case when the LP relaxation is tight, \ie\ all coordinates of an LP solution are integer.

However, as our experimental evaluation shows, there exist often tight non-trivial partitions, with a large subgraph $\SA$ and a small subgraph $\SB$.

\Paragraph{Conceptual Algorithm} These partitions can be obtained for example by \emph{a conceptual} Algorithm~\ref{alg:dense-CombiLP-sketch}, which assigns all nodes of the graph having an integer solution to $\SA$ and all others to $\SB$.
After solving both subproblems one checks fulfillment of the sufficient optimality condition defined by Proposition~\ref{prop:lower-bound}.
Should the condition hold, the problem is solved.
Otherwise one increases the subproblem $\SB$~(and respectively decreases $\SA$) by including those nodes $u\in\SV_\SA$, where the condition $(x'_u, x''_v)\in\argmin_{(s, t)\in\SX_{uv}} \theta_{uv}(s, t)$ does not hold for at least one $v\in\SV_{\SB}$, in terms of Proposition~\ref{prop:lower-bound}.

\begin{algorithm}[tb]
  \newcommand\MultiState[1]{\State\parbox[t]{\linewidth-\algorithmicindent}{\hangindent=12pt #1}}
  \begin{algorithmic}[1]
   \State Solve LP relaxation~\eqref{equ:LP-formulation}
   \State Assign all nodes with integer solution to $\SA$
   \Repeat
   \State Set $\SB:=(\SG\setminus\SA)$
    \MultiState{Compute optimal labelings $x'$ and $x''$ on $\SA$ and $\SB$ respectively. Use LP solver for $\SA$ and ILP for $\SB$.}
    \If {(Prop.~\ref{prop:lower-bound} holds for $x'$ and $x''$)}
     \State \Return $(x' \circ x'')$
    \Else
      \State Move those $u$ from $\SA$ to $\SB$, where Prop.~\ref{prop:lower-bound} fails
    \EndIf
   \Until{$\SB=\SG$}
  \end{algorithmic}
  \caption{Conceptual Dense-CombiLP Algorithm}
  \label{alg:dense-CombiLP-sketch}
\end{algorithm}

\Paragraph{Relation to CombiLP}
Algorithm~\ref{alg:dense-CombiLP-sketch} differs from CombiLP~\cite{savchynskyy2013global} in one very important aspect.
Namely, the subgraphs used in CombiLP are overlapping, whereas ours are not.
This substantially improves performance of the method in cases when the graph $\SG$ has a high connectivity.
In later sections of this paper we will give a detailed theoretical and empirical comparison of the methods.

In the following, we will turn the conceptual algorithm into a working one.
In order to do so, we will give positive answers to a number of important questions:
\begin{itemize}[nosep]
\item Why and when is the subproblem on $\SA$ LP-tight? This is critical, since we assume $\SA$ to be close to $\SG$ in its size and therefore it must be solvable by a~(polynomial) LP method.
\item Can we avoid running an LP solver for $\SA$ in each iteration?
\item Can we use~(fast specialized) approximate LP solvers on $\SA$ instead of~(slow off-the-shelf) exact ones?
\item How to encourage conditions of Proposition~\ref{prop:lower-bound} to be fulfilled for a possibly small $\SB$?
\end{itemize}

Although our construction mostly follows the one given in~\cite{savchynskyy2013global}, we repeat it here to keep the paper self-contained.

\section{Theoretical Background}

\Paragraph{Reparametrization}
Decompositions of the energy function $E_\SG(\theta,x)$ into unary and pairwise costs are not unique, which is, there exist other costs $\theta'\in\BR^{\SI}$ such that  $E_\SG(\theta,x)=E_\SG(\theta',x)$ for all labelings $x\in\SX_{\SV}$.
It is known~(see \eg~\citealt{werner2007linear}) and straightforward to check that such \emph{equivalent} costs can be obtained with an arbitrary vector~$\phi=(\phi_{u,v}(s)\in\BR \mid u\in\SV,\ v\in\nb(u),\ s\in\SX_u)$ as follows:
\begin{align}\label{equ:reparametrization}
 \theta'_u(s)\equiv \theta^{\phi}_u(s) &:=\theta_u(s)-\sum_{v\in\nb(u)}\phi_{u,v}(s)\\
 \theta'_{uv}(s,t) \equiv \theta^{\phi}_{uv}(s,t) & :=\theta_{uv}(s,t) + \phi_{u,v}(s)+\phi_{v,u}(t)\,.\nonumber
\end{align}
The costs~$\theta^{\phi}$ are called \emph{reparametrized} and the vector~$\phi$ is known as a \emph{reparametrization}.
Costs related by~\eqref{equ:reparametrization} are also called \emph{equivalent}.
In this sense, all vectors $\theta\in\BR^{\SI}$ can be split into equivalence classes according to~\eqref{equ:reparametrization}.
Other established terms for reparametrizations are \emph{equivalence preserving transformations}~\cite{cooper2004arc} and \emph{equivalent transformations}~\cite{shlezinger1976syntactic}.

\Paragraph{Dual Problem}
By swapping the $\min$ and $\sum$ operations in~\eqref{equ:energy-min} one obtains a lower bound $D(\theta) \le E_\SG(\theta,x)$ to the energy\footnote{It can be shown that this bound is in general less tight than~\eqref{equ:lower-bound}.}, which reads as
\begin{equation}\label{equ:LP-lower-bound}
  D(\theta):=\sum_{u\in\SV}\min_{s\in\SX_u}\theta_u(s) + \sum_{uv\in\SE}\min_{(s,t)\in\SX_{uv}}\theta_{uv}(s,t)\,.
\end{equation}
Although the energy $E_\SG(\theta,x)$ remains the same for all cost vectors from a given equivalence class~($E_\SG(\theta,x)=E_\SG(\theta^{\phi},x)$), the lower bound $D(\theta)$ is dependent on the reparametrization~($D(\theta)\neq D(\theta^{\phi})$).
Therefore, a natural maximization problem arises as maximization of the lower bound over all equivalent costs: $\max_{\phi}D(\theta^\phi)$.
It is known~(\eg\ \citealt{werner2007linear}) that this maximization problem is equivalent to the Lagrangian dual to the LP relaxation~\eqref{equ:LP-formulation}.
In turn, this implies that the minimum of~\eqref{equ:LP-formulation} coincides with the maximum of $D(\theta^\phi)$.
Therefore, one speaks about \emph{optimal reparametrizations} as those $\phi$, where the maximum is attained.
Apart from its lower bound property the function $D(\theta^\phi)$ is important because
(i)~function $D(\theta^{\phi})$ is concave \wrt\ $\phi$ as a sum of minima of linear functions;
(ii)~there exist many of scalable and efficient algorithms for its~(approximate) maximization, \eg~\cite{kolmorogrov2006convergent,cooper2008virtual}.

\Paragraph{Strict Arc-Consistency}
From a practical point of view it is important how an optimal reparametrization $\phi$ can be translated into a labeling, \ie\ into an~(approximate) solution of the energy minimization problem~\eqref{equ:energy-min}. The following definition plays a crucial role for this question in general and for our method in particular:
\begin{definition}[Strict arc-consistency]\label{def:sac}
  The node $u\in\SV$ is called \emph{strictly arc-consistent} \wrt\ the costs $\theta$ if there exists a label $x_u\in\SX_u$ and labels $x_v\in\SX_v$ for all $v\in\nb(u)$, such that it holds
  (i)~$\theta_u(x_u) < \theta_u(s)$ for all $s\in\SX_u\setminus\{x_u\}$; and
  (ii)~$\theta_{uv}(x_u,x_v) < \theta_{uv}(s,t)$ for all $(s,t)\in\SX_{uv}\setminus\{(x_u,x_v)\}$.

  The \emph{set of strictly arc-consistent nodes} is denoted by
  $\SAC(\theta): = \{ v\in\SV \mid v \text{ is strictly arc-consistent} \}$\,.%
\end{definition}
If all nodes are strictly arc-consistent \wrt\ the reparametrized costs $\theta^{\phi}$, then it is straightforward to check that $D(\theta^{\phi})=E_\SG(\theta^{\phi},x^*)=E_\SG(\theta,x^*)$, where
\begin{equation}\label{equ:locally-optimal-solution}
 x^*_u=\argmin_{s\in\SX_u}\theta^{\phi}_u(s)\,.
\end{equation}
In turn, this implies that $\phi$ is an optimal reparametrization and $x^*$ is an exact solution of the energy minimization problem~\eqref{equ:energy-min}.

\Paragraph{Reconstructing labeling from reparametrization}
Although there is no guarantee that the strict arc-consistency property holds for all nodes even with an optimal reparametrization, the rule~\eqref{equ:locally-optimal-solution} is still used to obtain an approximate minimizer for~\eqref{equ:energy-min} with arbitrary, also non-optimal reparametrizations $\phi$~(although, a number of more sophisticated rules were proposed, they are based on~\eqref{equ:locally-optimal-solution} and reduce to it if the strict arc-consistency holds for all nodes, see \eg\ \citealt{ravikumar2010message}).
Moreover, for an optimal reparametrization $\phi$, when the strict arc-consistency holds for a node~$u$, the complementary slackness conditions imply~(\eg\ \citealt{werner2007linear}) that strict arc-consistency of a node $u$ guarantees an integer solution of the LP relaxation in~$u$.

From the application point of view, an~(approximate) solution~\eqref{equ:locally-optimal-solution} is typically considered as good, if most of the nodes $u\in\SV$ satisfy the strict arc-consistency property.
At the same time, unless the strict arc-consistency holds for \emph{all} nodes, there is in general no theoretical guarantee that $x^*_u$ obtained as~\eqref{equ:locally-optimal-solution} coincide with the corresponding coordinate $y_u$ of an optimal solution $y=\argmin_{x\in\SX_\SV}E_\SG(\theta,x)$, even if the node~$u$ is strictly arc-consistent.

Algorithm~\ref{alg:dense-CombiLP} described in the next section, provides such guarantees by solving the non-relaxed minimization  problem~\eqref{equ:energy-min}.

\section{Detailed Algorithm Description}
Let us consider Algorithm~\ref{alg:dense-CombiLP}.
It differs from Algorithm~\ref{alg:dense-CombiLP-sketch} provided above in several aspects: Instead of solving the relaxed problem~\eqref{equ:LP-formulation} in the primal domain, it solves its dual formulation and resorts to the optimally reparametrized costs.
Strict arc-consistency is used in place of integrality to form the initial set $\SA$, which is justified by the fact that strict arc-consistency is sufficient for integrality.

The reparametrization step in line~\ref{step:alg:repa-problem} plays a crucial role for the whole method.
Due to this step,
solving the energy minimization problem on $\SA$ becomes trivial because of its strict arc-consistency.
It can be performed by selecting the best label in each node independently, according to~\eqref{equ:locally-optimal-solution}.
Therefore, \emph{there is no computational overhead} of resolving the problem on $\SA$ in each iteration.
Also, as more and more nodes from the initial subgraph $\SA$ move over to the subgraph $\SB$ their strict arc-consistency \emph{encourages} solution on $\SB$ to coincide with the locally optimal labels.
Moreover, instead of an optimal dual solution $\phi$ \emph{any}, also \emph{approximate}, \emph{non-optimal} reparametrization can be used.
According to Proposition~\ref{prop:lower-bound}, this does not affect correctness of Algorithm~\ref{alg:dense-CombiLP}.
Therefore, \emph{approximate solvers can be used} in line~\ref{step:alg:solve-dual} of the algorithm.
However, the better the dual solution is, the larger the set of strictly arc-consistent nodes $\SAC(\theta)$ is and therefore, the lower computational complexity of the ILP phase of the algorithm.
Finally, reparametrization of the costs typically speeds up the ILP solver in line~\ref{step:alg:ILP-B}, as it serves as preprocessing.

\begin{algorithm}[tb]
  \newcommand\MultiState[1]{\State\parbox[t]{\linewidth-\algorithmicindent}{\hangindent=12pt #1}}
  \begin{algorithmic}[1]
    \State Maximize the dual~\eqref{equ:LP-lower-bound} $\phi:=\argmax_{\psi}D(\theta^{\psi})$ ~\label{step:alg:solve-dual}
    \State Switch to the reparametrized costs: $\theta:=\theta^{\phi}$ \label{step:alg:repa-problem}
    \MultiState{Induce $\SA$ from $\SV_{\SA}:=\SAC(\theta)$ and get optimum on~$\SA$: ${x'_u:=\argmin\limits_{s\in\SX_u}\theta_u(s), u\in\SV_{\SA}}$}
    \Repeat
      \State Set $\SB:=(\SG\setminus\SA)$
      \State Compute an optimal labeling $x''$ on $\SB$ with ILP solver\label{step:alg:ILP-B}
      \State $O_{uv}:= \argmin\limits_{(s,t)\in\SX_{uv}} \theta_{uv}(s,t)$, see Prop.~\ref{prop:lower-bound}
      \If {$\forall uv\in\SE_{\SA\SB}: (x_u',x''_v)\in O_{uv}$}
        \State \Return $(x' \circ x'')$
      \Else
        \State $\SV_{\SA}:=\SV_{\SA}\setminus\{u \mid \exists uv\in\SE_\SAB: (x_u',x''_v)\notin O_{uv}\}$\label{step:alg:grow-B}
        \State Induce $\SA$ from $\SV_{\SA}$
      \EndIf
    \Until{$\SB=\SG$}
  \end{algorithmic}
  \caption{Dense-CombiLP Algorithm}
  \label{alg:dense-CombiLP}
\end{algorithm}

\section{Analysis of the Method}

\Paragraph{Family of Tight Partitions}
The proposition below that if the sufficient optimality criterion (Proposition~\ref{prop:lower-bound}) of Algorithm~\ref{alg:dense-CombiLP} is fulfilled for a partition $\SA,\SB$, then for any other partition $\SA',\SB'$ such that $\SB'\supseteq\SB$ the criterion holds as well:

\begin{proposition}\label{prop:greedy}
  Let $\SA$, $\SB$ and $\SA'$, $\SB'$ be partitions of $\SG$ \st~$\SB \subseteq \SB'$. Let also $\SV_\SA \subseteq \SAC(\theta)$.
  Let $x'$, $x''$ be the solutions of the MAP-inference problem~\eqref{equ:energy-min} on $\SA$ respectively $\SB$ and $y'$, $y''$ analogously for $\SA'$ and $\SB'$.
  If $x' \circ x''$ is the unique optimal assignment and it fulfills requirements of Proposition~\ref{prop:lower-bound}, then they are fulfilled for $y' \circ y''$ as well.
\end{proposition}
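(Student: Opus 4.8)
The plan is to prove the stronger statement that $y' \circ y'' = x' \circ x''$, i.e.\ that the restriction of the (unique) global optimum $w := x' \circ x''$ to the finer partition reproduces the subproblem solutions $y'$ and $y''$. Once this is established, the requirements of Proposition~\ref{prop:lower-bound} for $y' \circ y''$ are literally the same statements as for $w$, which I then verify directly. Throughout I would use the sub-observation that, since $\SV_\SA \subseteq \SAC(\theta)$ and $\SV_{\SA'} \subseteq \SV_\SA$ (because $\SB \subseteq \SB'$), every node of $\SA$ is strictly arc-consistent, so for any edge $uv \in \SE_\SA$ the pair of node-wise optima $(w_u,w_v)$, with $w_u = \argmin_{s}\theta_u(s)$, is the \emph{unique} minimizer of $\theta_{uv}$: strictness at $u$ forces the first coordinate of the unique minimizer to be $w_u$, strictness at $v$ forces the second to be $w_v$, and a unique minimizer cannot be two distinct points. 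In particular both $x'$ and $y'$ are obtained node-wise via \eqref{equ:locally-optimal-solution} and hence agree on $\SV_{\SA'}$, which settles the $\SA'$-part.

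The crux is to show that $w|_{\SV_{\SB'}}$ is optimal on $\SB'$. Here I would decompose, for an arbitrary $z \in \SX_{\SB'}$, the energy $E_{\SB'}(\theta,z)$ along $\SV_{\SB'} = \SV_\SB \cup N$, where $N := \SV_{\SB'}\setminus\SV_\SB = \SV_\SA\setminus\SV_{\SA'}$. The edge set $\SE_{\SB'}$ splits into (i)~the edges of $\SE_\SB$; (ii)~the edges internal to $N$, which lie in $\SE_\SA$; and (iii)~the edges between $N$ and $\SV_\SB$, which are \emph{exactly} old cut edges from $\SE_{\SAB}$. On part~(i) one has $E_\SB(\theta,z|_{\SV_\SB}) \ge E_\SB(\theta,x'')$ by optimality of $x''$; on part~(ii), since $N \subseteq \SAC(\theta)$, the sub-observation shows the unary terms on $N$ together with the internal $N$ pairwise terms are uniquely minimized at $w$; and on part~(iii) every term satisfies $\theta_{uv}(z_u,z_v) \ge \min_{(s,t)}\theta_{uv}(s,t) = \theta_{uv}(w_u,w_v)$, where the last equality is precisely the hypothesis of Proposition~\ref{prop:lower-bound} for the original partition (using $w_u = x'_u$, $w_v = x''_v$). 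Summing the three bounds yields $E_{\SB'}(\theta,z) \ge E_{\SB'}(\theta,w|_{\SV_{\SB'}})$, so $w|_{\SV_{\SB'}}$ is optimal on $\SB'$; since the global optimum is unique, the $\SB'$-solution is pinned down, giving $y'' = w|_{\SV_{\SB'}}$ and hence $y'\circ y'' = w$.

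It then remains to check the condition of Proposition~\ref{prop:lower-bound} for $y'\circ y'' = w$ on every new cut edge $uv \in \SE_{\SA'\SB'}$ with $u \in \SV_{\SA'}$, $v \in \SV_{\SB'}$. If $v \in N$, then $uv \in \SE_\SA$ has both endpoints in $\SV_\SA$, and the sub-observation gives $(w_u,w_v)\in\argmin_{(s,t)}\theta_{uv}(s,t)$. If $v \in \SV_\SB$, then $uv \in \SE_{\SAB}$ is an old cut edge and the original hypothesis gives $(w_u,w_v) = (x'_u,x''_v)\in\argmin_{(s,t)}\theta_{uv}(s,t)$. In either case the requirement holds, which finishes the argument.

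I expect the main obstacle to be the crux step, and specifically the bookkeeping that the edges joining the newly moved nodes $N$ to the old difficult part $\SV_\SB$ are \emph{precisely} the old cut edges $\SE_{\SAB}$, so that the optimality hypothesis for the original partition applies exactly there, while the two remaining edge groups are controlled respectively by optimality of $x''$ and by strict arc-consistency. A secondary delicate point is the identification $y'' = w|_{\SV_{\SB'}}$: this relies on uniqueness of the optimum, and the uniqueness assumption is doing genuine work, since without it a differently chosen $\SB'$-optimal labeling could disagree with $w$ at a $\SV_\SB$ node adjacent to $\SV_{\SA'}$ and thereby spuriously violate the boundary condition; I would therefore invoke the uniqueness hypothesis explicitly at this point.
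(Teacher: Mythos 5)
Your proof is correct and follows essentially the same route as the paper: the paper's own argument first proves a lemma stating exactly your identification $y'\circ y'' = x'\circ x''$ (via the same three-way decomposition of $E_{\SB'}$, controlled by optimality of $x''$ on $\SB$, strict arc-consistency on the moved nodes $N$, and the old cut-edge hypothesis, with the uniqueness assumption pinning down $y''$), and then verifies Proposition~\ref{prop:lower-bound} for the new partition by the identical case split of $\SE_{\SA'\SB'}$ into edges with both endpoints in $\SV_\SA$ versus edges of $\SE_\SAB$. One cosmetic slip: the edges joining $N$ to $\SV_\SB$ are \emph{contained in} $\SE_\SAB$ rather than equal to it (edges between $\SV_{\SA'}$ and $\SV_\SB$ also lie in $\SE_\SAB$), but your argument only uses the inclusion, so nothing breaks.
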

This property shows that there are potentially many partitions, which results in a tight bound and allows to apply a greedy strategy for growing the subgraph $\SB$ by adding all inconsistent nodes~(violating Proposition~\ref{prop:lower-bound}) at once, as it is done in line~\ref{step:alg:grow-B} of Algorithm~\ref{alg:dense-CombiLP}.

\Paragraph{Comparison to CombiLP}
As mentioned above, the CombiLP-method is very similar to ours, but uses a different optimality criterion.
Below we show that our criterion is in a certain sense stronger than theirs.
To this end, following~\cite{savchynskyy2013global}, we introduce the notion of a \emph{boundary complement subgraph}:

\begin{definition}[\citealt{savchynskyy2013global}]\label{def:boundary-complement}
 Let $\SA$ be an induced subgraph of $\SG$.
 A subgraph $\SB$ is called \emph{boundary complement to $\SA$ \wrt\ $\SG$} if it is induced by the set $(\SV_{\SG}\setminus \SV_{\SA})\cup \SV_{\DA}$, where
 $\SV_\DA=\{v\in\SV_{\SA} \mid \exists uv\in\SE_{\SG}: u\in\SV_{\SG}\setminus\SV_{\SA}\}$ is the set of nodes in $\SA$ incident to nodes outside $\SA$.
\end{definition}
The optimality criterion used in CombiLP reads:
\begin{theorem}[\citealt{savchynskyy2013global}]\label{thm:CombiLP}
 Let $\SA$ be a subgraph of $\SG$ and $\SB$ be its boundary complement \wrt\ $\SA$. Let $x_{\SA}$ and $x'_{\SB}$ be labelings minimizing $E_{\SA}$ and $E_{\SB}$ respectively \emph{and} let all nodes $v\in\SV_{\SA}$ be strictly arc-consistent. Then from
 \begin{equation}\label{consistencyCondition_equ}
 x_v=x'_v\ \mbox{ for all }\ v\in\SV_{\DA}
\end{equation}
it follows that the labeling $x^*$ with coordinates\\
${x^*_v=\left\{ 
\begin{array}{ll}
 x_v, & v\in\SA\\
 x'_v, & v\in\SB\setminus\SA\\
\end{array}
\right.,\ v\in\SV_{\SG}}$, is optimal on $\SG$.
\end{theorem}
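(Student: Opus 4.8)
The plan is to produce an exact decomposition of the global energy that isolates the \emph{strict interior} $I:=\SV_\SA\setminus\SV_\DA$ of $\SA$ from the rest, and then to recognize the rest as precisely the subproblem $\SB$. First I would record what strict arc-consistency buys us on $\SA$. Since every $v\in\SV_\SA$ is strictly arc-consistent (Definition~\ref{def:sac}), the node-wise minimizer $x_v=\argmin_{s}\theta_v(s)$ is well defined, and reading off the edge conditions shows that $(x_u,x_v)$ is the unique minimizer of $\theta_{uv}$ for every $uv\in\SE_\SA$; matching the two endpoints forces these choices to be globally consistent on $\SA$. Hence the minimizing labeling $x_\SA$ of $E_\SA$ attains each unary and each pairwise minimum separately, so that $\theta_u(x_u)\le\theta_u(z_u)$ for all $u\in\SV_\SA$ and $\theta_{uv}(x_u,x_v)\le\theta_{uv}(z_u,z_v)$ for all $uv\in\SE_\SA$ and every labeling $z$.

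The central structural step is the edge bookkeeping. By the definition of the boundary $\SV_\DA$, no node of the interior $I$ has an edge leaving $\SV_\SA$, so every edge of $\SG$ incident to $I$ lies entirely inside $\SA$. Therefore the edges of $\SG$ split into those touching $I$ and those avoiding $I$; and because $\SV_\SB=\SV_\SG\setminus I$ with $\SB$ induced, the nodes and edges avoiding $I$ are exactly $\SV_\SB$ and $\SE_\SB$. This yields the exact identity $E_\SG(\theta,z)=\tilde E(z)+E_\SB(\theta,z|_\SB)$, where $\tilde E(z)$ collects the unaries on $I$ together with the pairwise terms on the edges touching $I$, all of which belong to $\SE_\SA$. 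I expect this clean split into exactly two pieces to be the main obstacle: one must check that the overlap of the two subgraphs on $\SV_\DA$ is handled without double counting, which is precisely what routing every boundary unary and every within-boundary edge into $E_\SB$ rather than into $\tilde E$ accomplishes.

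With the identity in hand the conclusion is immediate. The arc-consistency bounds give $\tilde E(z)\ge\tilde E(x_\SA)$, since every term of $\tilde E$ is a unary on $\SV_\SA$ or a pairwise term on $\SE_\SA$, while optimality of $x'_\SB$ on $\SB$ gives $E_\SB(\theta,z|_\SB)\ge E_\SB(\theta,x'_\SB)$; summing shows $\tilde E(x_\SA)+E_\SB(\theta,x'_\SB)$ is a global lower bound. Finally I would verify that $x^*$ attains it: by construction $x^*|_\SA=x_\SA$, so $\tilde E(x^*)=\tilde E(x_\SA)$, and the agreement hypothesis $x_v=x'_v$ on $\SV_\DA$ is exactly what forces $x^*|_\SB=x'_\SB$, whence $E_\SB(\theta,x^*|_\SB)=E_\SB(\theta,x'_\SB)$. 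Thus $E_\SG(\theta,x^*)$ equals the lower bound, so $x^*$ is optimal on $\SG$. This mirrors the tightness argument behind Proposition~\ref{prop:lower-bound}, the only difference being that the coupling between the two parts now lives on the shared boundary $\SV_\DA$ rather than across a cut.
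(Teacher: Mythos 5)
Your proof is correct. Note, however, that this paper does not actually prove Theorem~\ref{thm:CombiLP}: it is quoted from the CombiLP paper \cite{savchynskyy2013global}, and the supplement only proves Propositions~\ref{prop:lower-bound}, \ref{prop:greedy} and \ref{prop:old-criterion}. So the comparison can only be with the related arguments the paper does contain, and there your route lines up well: your step~1 (strict arc-consistency forces the node-wise minimizers to minimize every unary \emph{and} every pairwise term on $\SE_\SA$ simultaneously, with uniqueness gluing the two endpoints' choices together) is exactly the mechanism exploited in the paper's proof of Proposition~\ref{prop:old-criterion}, and your step~3--4 (exhibit a term-wise lower bound and show $x^*$ attains it) is the same tightness pattern as the proof of Proposition~\ref{prop:lower-bound}. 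The one piece that is genuinely yours, and which you correctly identified as the crux, is the exact splitting $E_\SG(\theta,z)=\tilde E(z)+E_\SB(\theta,z|_{\SV_\SB})$ obtained by partitioning edges into those touching the interior $I=\SV_\SA\setminus\SV_\DA$ and those avoiding it; this handles the overlap on $\SV_\DA$ without double counting precisely because boundary unaries and within-boundary edges are routed into $E_\SB$, and because no edge leaves $I$ directly to $\SV_\SG\setminus\SV_\SA$ by the definition of $\SV_\DA$. That decomposition, plus the observation that the agreement condition~\eqref{consistencyCondition_equ} makes $x^*|_{\SV_\SB}=x'_{\SB}$, closes the argument; the only hypothesis you use implicitly is that $\SA$ is an \emph{induced} subgraph (so edges touching $I$ indeed lie in $\SE_\SA$), which is guaranteed by Definition~\ref{def:boundary-complement}.
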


As can be seen from comparing Proposition~\ref{prop:lower-bound} and Theorem~\ref{thm:CombiLP}, the main difference between the methods is that we use a partition of the graph $\SG$, \ie\ non-intersecting subgraphs, whereas the subgraphs in CombiLP are boundary complement and therefore intersect.

The following proposition states that the bounds produced by our method are at least as tight as those of CombiLP:

\begin{proposition}\label{prop:old-criterion}
  Let $\SA,\SB$ be a partition of a graph $\SG$ and $\SA',\SB$ be boundary complement for $\SG$ and $\SA\subseteq\SA'\subseteq\SAC(\theta)$.
  Let also $x'$, $x''$ be optimal labelings on $\SA'$ and $\SB$.
  If the condition~\eqref{consistencyCondition_equ} holds for $x'$ and $x''$, \ie\ $x'_v=x''_v$ for all $v\in\SV_{\DA'}$, then Proposition~\ref{prop:lower-bound} holds for $x'_\SA$ and $x''$ as well, where $x'_\SA$ is the restriction of $x'$ to the set $\SV_\SA$.
  In other words, for the same subgraph $\SB$ fulfillment of Theorem~\ref{thm:CombiLP} implies fulfillment of Proposition~\ref{prop:lower-bound}.
\end{proposition}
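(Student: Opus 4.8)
The plan is to reduce the claim to a purely local statement about a single cut edge and then invoke strict arc-consistency. First I would make explicit how the two constructions interlock at the level of node sets. Since $\SA,\SB$ form a partition, $\SV_\SA=\SV_\SG\setminus\SV_\SB$; since $\SB$ is boundary complement to $\SA'$, its node set is $\SV_\SB=(\SV_\SG\setminus\SV_{\SA'})\cup\SV_{\DA'}$. Combining these yields the two identities I rely on: $\SV_\SA=\SV_{\SA'}\setminus\SV_{\DA'}$ (so $\SV_\SA$ is exactly the interior of $\SA'$) and $\SV_\SB\cap\SV_{\SA'}=\SV_{\DA'}$. Because $\SV_\SA\subseteq\SV_{\SA'}\subseteq\SAC(\theta)$, every node of $\SA$ is strictly arc-consistent, so by the discussion following Definition~\ref{def:sac} the unary-minimizer labeling $\hat x_w:=\argmin_{s\in\SX_w}\theta_w(s)$ is the unique optimum both on $\SA$ and on $\SA'$; in particular $x'_w=\hat x_w$ for every $w\in\SV_{\SA'}$ and $x'_\SA\in\argmin_{x\in\SX_\SA}E_\SA(\theta,x)$. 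Together with the assumption that $x''$ is optimal on $\SB$, the preconditions of Proposition~\ref{prop:lower-bound} are met, and it remains only to verify its argmin condition on every edge of $\SE_\SAB$.

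Next I would isolate the local key step: if both endpoints of an edge $uv$ are strictly arc-consistent, then $(\hat x_u,\hat x_v)$ is the unique minimizer of $\theta_{uv}$. Strict arc-consistency of $u$ supplies a label at $v$ with which $\hat x_u$ strictly minimizes $\theta_{uv}$, and strict arc-consistency of $v$ symmetrically supplies a label at $u$ with which $\hat x_v$ strictly minimizes $\theta_{uv}$. Since the strict inequalities in Definition~\ref{def:sac} force the minimizer of $\theta_{uv}$ to be unique, these two witnesses must coincide, giving $(\hat x_u,\hat x_v)\in\argmin_{(s,t)\in\SX_{uv}}\theta_{uv}(s,t)$.

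I would then close the argument edge by edge. Fix $uv\in\SE_\SAB$ with $u\in\SV_\SA$ and $v\in\SV_\SB$. As $\SV_\SA$ is the interior of $\SA'$, the node $u$ has no neighbour outside $\SV_{\SA'}$, hence $v\in\SV_{\SA'}$; combined with $v\in\SV_\SB$ and $\SV_\SB\cap\SV_{\SA'}=\SV_{\DA'}$ this forces $v\in\SV_{\DA'}$. Thus $u,v\in\SAC(\theta)$, the local key step applies, and $(x'_u,x'_v)=(\hat x_u,\hat x_v)$ is the unique minimizer of $\theta_{uv}$. Finally the CombiLP condition~\eqref{consistencyCondition_equ} asserts $x''_v=x'_v$ precisely for $v\in\SV_{\DA'}$, whence $(x'_u,x''_v)=(x'_u,x'_v)\in\argmin_{(s,t)\in\SX_{uv}}\theta_{uv}(s,t)$, which is exactly the condition of Proposition~\ref{prop:lower-bound}.

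The main obstacle I anticipate is not the final chain of implications but the local key step, specifically arguing that the two a priori distinct witness labels produced by the strict arc-consistency of the two endpoints necessarily agree. The uniqueness of the pairwise minimizer forced by the strict inequalities is what makes this work, and it is also where strict (rather than plain) arc-consistency is indispensable: without strictness the minimizer of $\theta_{uv}$ could be non-unique and the two witnesses need not coincide. A secondary point to state carefully is that the cut edges only ever compare $x'$ and $x''$ at boundary nodes $v\in\SV_{\DA'}$, which is exactly the set on which the CombiLP hypothesis guarantees agreement; no assumption about interior nodes of $\SB$ is used.
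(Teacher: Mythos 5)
Your proof is correct and follows essentially the same route as the paper's: establish optimality of $x'_\SA$ on $\SA$ via strict arc-consistency and \eqref{equ:locally-optimal-solution}, observe that every edge of $\SE_\SAB$ has its $\SA$-endpoint in the interior of $\SA'$ and its $\SB$-endpoint in $\SV_{\DA'}$, use strict arc-consistency to get $\theta_{uv}(x'_u,x'_v)=\min_{(s,t)\in\SX_{uv}}\theta_{uv}(s,t)$ on those edges, and finish with the agreement condition~\eqref{consistencyCondition_equ}. The only difference is that you spell out the two-witness uniqueness argument behind the ``local key step,'' which the paper leaves implicit as a known consequence of Definition~\ref{def:sac}.
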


\section{Technical Details}

\begin{figure}
  \newcommand\TmpWormGraphic[2]{}%
  \newcommand\TmpWorm[1]{%
    \begin{subfigure}{\columnwidth}%
      \centering
      \includegraphics[width=.85\textwidth]{{imgs/worm_#1_final}.pdf}%
      \caption{\Dataset{\detokenize{#1}}}%
    \end{subfigure}}%

  \centering
  \TmpWorm{cnd1threeL1_1229063}\\
  \TmpWorm{unc54L1_0123071}

  \caption{
    Visualization of the maximal ILP subproblem for \Dataset{worms}, where dots correspond to cell nuclei.
    Red dots are part of the ILP subproblem~($\textcolor{red}{\bullet}\in\SV_\SB$) and for blue dots the solution of the LP-relaxation is used~($\textcolor[rgb]{.5,.8,1}{\bullet}\in\SV_\SA$).
    For both instances, the upper image shows the result for \Algorithm{clp} and the lower one corresponds to \Algorithm{dclp}, see the experimental evaluation section for information about the solvers.
  }
  \label{fig:masks}
\end{figure}

\Paragraph{Post-Processing of Reparametrization}
The maximum of the dual objective $D(\theta^{\phi})$ is typically non-unique.
Since $D$ is a concave function, the set of its maxima is convex and therefore it contains either a unique element or a continuum.
Unfortunately, not all optimal~(or suboptimal ones, corresponding to the same value of $D$) reparametrizations are equally good for our method.
Moreover, different dual algorithms return different reparametrizations and the fastest algorithm may not return an appropriate one.

Therefore, we developed a post-processing algorithm to turn an arbitrary reparametrization into a suitable one without decreasing the value of $D$.
This algorithm consists of two steps: (i)~several iterations of a message passing~(dual block-coordinate ascent) algorithm, which accumulates weights in unary costs and (ii)~partial redistribution of unary costs between incident pairwise cost functions.
This two-step procedure empirically turns most of the nodes, where the LP relaxation~\eqref{equ:LP-formulation} has an integer solution, into strictly arc-consistent ones.
The details of both steps are described in the supplement.

\Paragraph{Higher Order Extensions}
All discussed techniques are easily extended to the higher-order MAP-inference problem
\begin{equation}
  \min_{x\in\SX_\SV} \left[ E_\SG(\theta, x) := \sum_{c\in\SC} \theta_c(x_c) \right]\,.
\end{equation}
where the cliques $c\in\SC\subseteq 2^\SV$ in the decomposition of the energy function $E_\SG(\theta, x)$ may contain terms dependent on 3, 4 and more nodes.
The bound~\eqref{equ:lower-bound} in the higher-order case reads as
\begin{multline}
  \label{equ:lower-bound-higher-order}
  E_\SG(\theta, x'\circ x'') \geq E_\SA(\theta, x') + \\ + E_\SB(\theta, x'') + \sum_{c\in\SC_\SAB} \min_{y_c\in\SX_c} \theta_c(x_c)\,,
\end{multline}
where $\SC_\SAB := \{ c\in\SC \mid \exists u\in\SV_\SA, v\in\SV_\SB\colon u,v\in c \}$ similar to $\SE_\SAB$ in the pairwise case.
Proposition~\ref{prop:lower-bound} for the higher-order case turns into:
\begin{proposition}
  Let $x'\in\argmin_{x\in\SX_\SA} E_\SA(\theta, x)$ and $x''\in\argmin_{x\in\SX_\SB} E_\SB(\theta, x)$.
  The lower bound~\eqref{equ:lower-bound-higher-order} is tight if for all $c\in\SC_\SAB$ and $v\in c$ it holds that $y^*_v = \begin{cases} x'_v\,, & v\in\SV_\SA \\ x''_v\,, & v\in\SV_\SB\end{cases}$ where $y^* \in \argmin_{y\in\SX_\SC} \theta_c(y)$.
\end{proposition}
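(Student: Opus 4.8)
The plan is to transcribe, clique-by-clique, the same elementary argument that makes the pairwise Proposition~\ref{prop:lower-bound} ``trivial''. First I would fix the bookkeeping of the clique set. The cliques partition as $\SC = \SC_\SA \cup \SC_\SB \cup \SC_\SAB$, where $\SC_\SA := \{c\in\SC \mid c\subseteq\SV_\SA\}$ and $\SC_\SB := \{c\in\SC \mid c\subseteq\SV_\SB\}$ collect the cliques lying entirely on one side, and $\SC_\SAB$ is as in the statement; every clique either stays within $\SV_\SA$, stays within $\SV_\SB$, or contains nodes from both and hence lands in $\SC_\SAB$. Since $\SC_\SA$-cliques read only coordinates of $x'$ and $\SC_\SB$-cliques only coordinates of $x''$, this regroups into the exact identity
\begin{equation*}
  E_\SG(\theta, x'\circ x'') = E_\SA(\theta, x') + E_\SB(\theta, x'') + \sum_{c\in\SC_\SAB}\theta_c\bigl((x'\circ x'')_c\bigr)\,,
\end{equation*}
the higher-order analogue of~\eqref{equ:energy-decomposition}.

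Next I would re-establish that the right-hand side of~\eqref{equ:lower-bound-higher-order} is a genuine lower bound for $\min_x E_\SG(\theta,x)$: applying the same decomposition to an arbitrary $\tilde x' \circ \tilde x''$ and using the three term-wise estimates $E_\SA(\theta,\tilde x') \ge E_\SA(\theta, x')$, $E_\SB(\theta,\tilde x'') \ge E_\SB(\theta, x'')$ (optimality of $x'$ and $x''$ on $\SA$ and $\SB$), together with $\theta_c\bigl((\tilde x'\circ\tilde x'')_c\bigr) \ge \min_{y_c}\theta_c(y_c)$ for each spanning clique, yields the inequality after summation.

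The key step is then to show that the specific concatenation $x'\circ x''$ attains this bound. The hypothesis gives, for every $c\in\SC_\SAB$, a minimizer $y^*\in\argmin_{y}\theta_c(y)$ whose every coordinate $y^*_v$, $v\in c$, equals the matching coordinate of $x'\circ x''$ ($x'_v$ for $v\in\SV_\SA$, $x''_v$ for $v\in\SV_\SB$). Hence $(x'\circ x'')_c = y^*$ and therefore $\theta_c\bigl((x'\circ x'')_c\bigr) = \min_{y_c}\theta_c(y_c)$ for each spanning clique. Substituting these equalities into the decomposition of the first paragraph makes $E_\SG(\theta, x'\circ x'')$ coincide exactly with the right-hand side of~\eqref{equ:lower-bound-higher-order}; chaining this with the lower-bound property collapses the inequality to an equality, so the bound is tight (and, as in the pairwise case, $x'\circ x''$ is a global minimizer of $E_\SG$).

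I expect no real obstacle here, as the proof is the verbatim higher-order rewrite of the pairwise one. The only point needing mild care is the clique partition: verifying that $\SC_\SA$- and $\SC_\SB$-cliques depend solely on $x'$ resp.\ $x''$, so that they genuinely reassemble into the subproblem energies $E_\SA$ and $E_\SB$, and confirming that the per-node agreement condition on $y^*$ is precisely what certifies $(x'\circ x'')_c$ as a clique-wise minimizer for every $c\in\SC_\SAB$.
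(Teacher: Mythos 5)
Your proof is correct and is essentially the paper's own argument: the paper omits the proof of this proposition, stating only that it ``follows the same reasoning as the proof of Proposition~\ref{prop:lower-bound}'', and your clique-wise transcription---splitting $\SC$ into cliques interior to $\SA$, interior to $\SB$, and spanning cliques $\SC_\SAB$, then using the hypothesis to certify that each spanning clique's cost $\theta_c\bigl((x'\circ x'')_c\bigr)$ attains its minimum so that the decomposition coincides with the bound---is precisely that reasoning, mirroring the one-line pairwise proof in the supplement.
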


The proof follows the same reasoning as the proof of Proposition~\ref{prop:lower-bound} and is omitted here.

\begin{table}
  \newcommand\B{\bfseries}
  \newcommand\HM[1]{\multicolumn{1}{c}{#1}}
  \newcommand\ND{\multicolumn{1}{c}{---}}
  \newcommand\I[1]{\fontsize{8pt}{8pt}\selectfont (#1\rlap{)}}
  \centering
  \begin{tabular}{@{}l rrr@{}}
    \toprule
    dataset (avg. $|V|$)                   & \HM{popt} & \HM{clp}  & \HM{dclp}      \\
    \cmidrule(r){1-1}\cmidrule(lr){2-2}\cmidrule(lr){3-3}\cmidrule(l){4-4}
    \Dataset{worms}           \hfill(558)  &     100\% & 69.30\% & \B 26.08\% \\
    \Dataset{protein-folding} \hfill(37)   &   79.22\% &   100\% & \B 71.03\% \\
    \Dataset{color-seg}       \hfill(79k)  &   12.10\% &  0.16\% & \B  0.06\% \\
    \Dataset{mrf-stereo}      \hfill(138k) &   45.19\% & 33.58\% & \B 33.49\% \\[-3pt]
                                           & \I{53.30\%} & \I{0.45\%} & \B \I{0.20\%} \\
    \Dataset{OnCallRostering} \hfill(948)  &       \ND & 98.80\% & \B 65.68\% \\
    \bottomrule
  \end{tabular}
  \caption{
   Average size of the final ILP subproblem~(smaller is better).
    The table shows the percentage of {\em labels} that reside in the ILP subproblem. The measure makes comparable \Algorithm{clp} and \Algorithm{dclp}, which work nodewise, and \Algorithm{popt} working labelwise. See supplement for details. Values in parentheses for \Dataset{mrf-stereo} show the percentage only for the two problem instances, which were solved by \Algorithm{clp} and \Algorithm{dclp}.
  }
  \label{tab:mask-size}
\end{table}

\begin{table*}
  \newcommand\B{\bfseries}
  \newcommand\HS[1]{\multicolumn{1}{c}{#1}}
  \newcommand\HM[1]{\multicolumn{2}{c}{#1}}
  \newcommand\AD{$\frac{2|\SE|}{|V|\cdot(|V|-1)}$}
  \newcommand\ND{\multicolumn{1}{c}{---}}
  \centering
  \begin{tabular}{l r rr rr rr rr rr rr}
    \toprule
    dataset (\#instances)           & \HS{density} & \HM{cpx} & \HM{tb2}       & \HM{popt-tb2}  & \HM{clp-orig} & \HM{clp-tb2}   & \HM{dclp-tb2} \\
    \cmidrule(r){1-2}\cmidrule(lr){3-4}\cmidrule(lr){5-6}\cmidrule(lr){7-8}\cmidrule(lr){9-10}\cmidrule(lr){11-12}\cmidrule(l){13-14}
    \Dataset{worms} (30)            &     10.6\,\% & 1 & 54.7 &    13 &    8.3 &    13 &    8.0 &     15 & 14.2 &    17 &    6.9 & \B 25 & \B  5.8 \\[-3pt]
                                    &&&&&&&&&&&& \fontsize{8pt}{8pt}\selectfont (17\rlap{)} & \fontsize{8pt}{8pt}\selectfont (3.1\rlap{)} \\[-1pt]
    \Dataset{protein-folding} (11)  &      100\,\% & 2 & 48.5 & \B 11 &    1.1 & \B 11 &    1.7 &     10 & 16.8 & \B 11 &    0.9 & \B 11 & \B  0.8 \\
    \cmidrule(lr){1-2}
    \Dataset{color-seg} (19)        &    0.007\,\% & 5 &  4.9 &    15 &   22.1 & \B 18 & \B 0.3 &  \B 18 &  7.6 & \B 18 &    1.1 & \B 18 &     1.4 \\
    \Dataset{mrf-stereo} (3)        &    0.003\,\% & 0 &  \ND &     0 &   \ND  &     1 &    0.9 &  \B  2 & 46.9 & \B  2 &    3.2 & \B  2 & \B  2.3 \\
    \cmidrule(lr){1-2}
    \Dataset{OnCallRostering} (3)   &      0.9\,\% & 2 &  0.9 &     2 &    0.1 &   \ND &    \ND &    \ND &  \ND & \B  3 &    2.3 & \B  3 & \B  1.1 \\
    \bottomrule
  \end{tabular}
  \caption{
    Overview of benchmark results.
    For each method the left number displays the \emph{number of solved instances} and the right one the \emph{average runtime in minutes} for solved instances.
    We computed the graph density as $\frac{2|\SE|}{|\SV|(|\SV|-1)}$ for pairwise models and as $\frac{|\{ (u,v)\in\SV^2 \mid \exists c\in\SC\colon u\in c, v\in c, u\neq v \}|}{|\SV|(|\SV|-1)}$ for \Dataset{OnCallRostering}.
    Values in parentheses for \Algorithm{dclp-tb2} show the average running time on the $17$ problem instances solved by the best competitor \Algorithm{clp-tb2}.
  }
  \label{tab:performance}
\end{table*}

\section{Experimental Evaluation}

\Paragraph{Algorithms}
In this section we compare our proposed algorithm with other related methods.
As baselines we use CPLEX 12.6.2~\cite{cplex} and ToulBar2 0.9.8.0~\cite{cooper2010soft} where the first is the well-known commercial optimizer and the latter is one of the best dedicated branch-and-bound solvers for~\eqref{equ:energy-min}, see comparison in~\cite{hurley2016multi}.
We used comparable parameters and settings like the ones used in~\cite{hurley2016multi}.
They are denoted by \Algorithm{cpx} or \Algorithm{tb2} respectively.
The original CombiLP~\cite{savchynskyy2013global} implementation is referred as \Algorithm{clp-orig}. For a fair comparison, we modified it to make it compatible with arbitrary LP and ILP solvers, in particular, by applying the reparametrization post-processing algorithm described above. The modified method referred as \Algorithm{clp} is up to an order of magnitude {\em faster} than the original one~\Algorithm{clp-orig} (see Table~\ref{tab:performance}).
For the experiments with \Algorithm{clp} and \Algorithm{dclp} we used both CPLEX and ToulBar2 as ILP-solvers. The corresponding variants of \Algorithm{clp} are denoted as \Algorithm{clp-cpx} and \Algorithm{clp-tb2} respectively and similarly for \Algorithm{dclp}.
Since the ToulBar2 variants~(\Algorithm{clp-tb2} and \Algorithm{dclp-tb2}) were superior to the CPLEX variants in \emph{all} our tests, we will mainly discuss the former here~(see supplement for all results).
\TRWS~\cite{kolmorogrov2006convergent} is used as fast block-coordinate-descend LP-solver everywhere except higher-order models.
We used a fast implementation of the solver from the work~\cite{shekhovtsov2015maximum}.
Only for higher-order examples we resort to SRMP~\cite{kolmogorov2015new} using the \emph{minimal} or \emph{basic LP relaxation}~(for details see\ \citealt{kolmogorov2015new}).
We set the maximum number of \TRWS/SRMP iterations to $2000$.
Furthermore we tested the performance of a recent partial optimality technique~\cite{shekhovtsov2015maximum} which is denoted by \Algorithm{popt}.
As this approach does not solve the whole problem, we run ToulBar2 on the reduced model and measure the total running time~(\Algorithm{popt-tb2}).
We set the maximal running time for all methods to $1$ hour.

\begin{figure}
  \centering
  \includegraphics{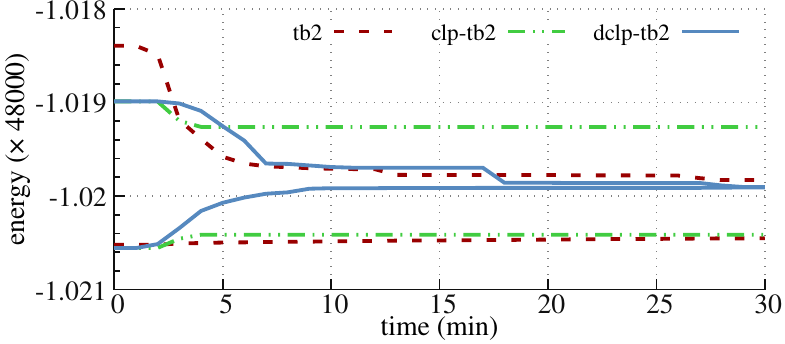}
  \caption{Primal and dual bound progress for \Dataset{worms}.
  All values are averaged over instances where at least one of the solvers returned the optimal solution~($25$~instances).
  To improve clarity only the three best solvers are shown.}
  \label{fig:value-bound}
\end{figure}

\Paragraph{Datasets}
We verify performance of the algorithms on the following publicly available datasets: \Dataset{worms} \cite{dataset-worm}, \Dataset{color-seg} \cite{dataset-colorseg}, \Dataset{mrf-stereo} \cite{dataset-mrfstereo} and \Dataset{OnCallRostering} \cite{dataset-minizinc}, \Dataset{protein\hyp{}folding} \cite{dataset-proteinfolding}.
Each of these datasets is included to highlight specific strengths and weaknesses of the competing methods.
The \Dataset{worms} dataset~(30~instances) serves as a prime example for our algorithm due to its relatively densely connected graph structure and a small duality gap.
The \Dataset{mrf-stereo}~(3~instances) and \Dataset{color-seg}~(19~instances) datasets consist of sparsely connected grid-models and are used to compare performance to the CombiLP method \Algorithm{clp}.
The \Dataset{protein\hyp{}folding} dataset can be split into easy problems~(many nodes, sparsely connected) and hard problems~(only around 33-40 nodes, fully connected).
In the following, we only consider the hard problems~($11$ instances in total).
Last but not least, the dataset \Dataset{OnCallRostering}~($3$ instances) is included as an example of higher-order models, which include cliques of order four.
Unfortunately, we were unable to convert other instances of this dataset from the benchmark~\cite{hurley2016multi} because of a memory bottlenecks in the conversion process.
Apart from \Dataset{OnCallRostering} and \Dataset{worms}, all other problem instances were taken from the OpenGM benchmark~\cite{kappes2015comparative}.

\Paragraph{Results}
We compare and analyse performance of our method in the following three settings:
\emph{(i)}~targeted dense problems like the \Dataset{worms} and \Dataset{protein-folding} datasets;
\emph{(ii)}~sparse problems~(\Dataset{mrf-stereo} and \Dataset{color-seg}), and
\emph{(iii)}~exemplary higher-order problems~(\Dataset{OnCallRostering}).

 \emph{(i)~dense models:} On the \Dataset{worms} dataset our method \Algorithm{dclp-tb2} clearly outperforms competitors, as Table~\ref{tab:performance} shows.
 \Algorithm{dclp-tb2} solves $25$ instances out of $30$, the next competitor \Algorithm{clp-tb2} -- only 17. Moreover, our solver is also more than $2$ times faster than \Algorithm{clp-tb2} in average. This is due to the fact that the resulting ILP subproblem of \Algorithm{dclp} is much smaller that those of \Algorithm{clp}, see Figure~\ref{fig:masks} for visual comparison. The partial optimality method is unable to reduce the problem (see Table~\ref{tab:mask-size}) because of infinite pairwise costs to disallow assigning the same label to different nodes. 
 Figure~\ref{fig:value-bound} shows primal and dual bounds as a function of computational time for this dataset.

 Although on the \Dataset{protein-folding} dataset \Algorithm{dclp-tb2} also outperforms all its competitors, the improvement over \Algorithm{clp-tb2} and \Algorithm{tb2} is not that pronouncing as for the \Dataset{worms} dataset. This is because the final ILP subproblem of \Algorithm{dclp} covers a significant part of the whole graph (over $71\%$ in average). To satisfy its optimality criterion, \Algorithm{dclp} performs up to $5$ iterations with smaller ILP subproblems.  
 In contrast, \Algorithm{clp} considers the whole graph as an ILP subproblem right at the very first iteration. Interestingly that even under this circumstances \Algorithm{clp-tb2} outperforms \Algorithm{tb2} and \Algorithm{clp-cpx} outperforms \Algorithm{cpx} (see supplement for details). The latter solves only $2$ problem instances out of $11$, whereas \Algorithm{clp-cpx} is able to cope with $9$. We attribute it to the reparametrization, which is performed by \Algorithm{clp} prior to passing the problem to \Algorithm{cpx} or \Algorithm{tb2} and plays a role of an efficient presolving. 

\emph{(ii)~sparse models:} Sparse (grid-structured) datasets \Dataset{mrf\hyp{}stereo} and \Dataset{color\hyp{}seg} with about $10^5$ graph nodes each are very well suitable for both \Algorithm{clp} and \Algorithm{dclp} methods and are difficult for \Algorithm{cpx} and \Algorithm{tb2}. Both \Algorithm{clp} and \Algorithm{dclp} are able to solve all the problems except the largest one (\Dataset{teddy} from \Dataset{mrf-stereo} dataset with over $1.6\times 10^5$ nodes and $60$ labels) in similar time. On \Dataset{color-seg} the method \Algorithm{clp-tb2} is somewhat faster, whereas \Algorithm{dclp-tb2} requires less time on \Dataset{mrf-stereo}. This is due to the fact that \Algorithm{dclp} consistently produces smaller ILP subproblems (see Table~\ref{tab:mask-size} for comparison), but \Algorithm{clp} may require less iterations due to the start with a larger ILP subproblem. Partial optimality \Algorithm{popt-tb2} is the winner for the \Dataset{color-seg} dataset: Although its ILP subproblems are larger than those of \Algorithm{clp} and \Algorithm{dclp}, it runs an ILP solver only once. However, results of \Algorithm{popt-tb2} on \Dataset{mrf-stereo} are useful only up to a limited extend: They are sufficient to solve only a single, the simplest problem from that dataset (\Dataset{tsukuba}). \Algorithm{dclp-tb2} and \Algorithm{clp-tb2} in contrast solve two problem instances each.

\emph{(iii)~higher-order models:}
The dataset \Dataset{On\-Call\-Rostering} is included mainly to show applicability of our method to higher-order models.
Generally, higher-order models pose additional difficulties to solvers because they are intrinsically dense and the size of an ILP formulation of the problem grows exponentially with the problem order, therefore even small problems may not fit into memory of an ILP solver.
The \Algorithm{dclp} method again shows its advantage over \Algorithm{clp} as similarly as in the case of the \Dataset{worms} dataset: Since the problems are intrinsically dense, the ILP subproblem for \Algorithm{dclp} is smaller, which results in $2\times$ speed-up compared to \Algorithm{clp}. We also found \Algorithm{tb2} and \Algorithm{cpx} to be quite efficient on this dataset, although they were able to solve only $2$ problems out of $3$.

\section{Conclusions}

We presented a new method, suitable to solve efficiently large-scale MAP-inference problems.
The prerequisites for efficiency is the ``almost'' tight LP relaxation, \ie\ the non-strict-arc-consistent subset of nodes should constitute only a small portion of the problem.
In this case, it isnot the size of the problem which important, but only the size of its non-strict-arc-consistent subproblem.
Comparing to previous works, our method is able to further reduce this size, which is especially notable if the underlying graph structure of the model is non-sparse.
In the future, we plan to extend the method to a broader class of combinatorial problems.

\section*{Acknowledgement}

This work was supported by the DFG grant ``Exact Re\-lax\-a\-tion-Based Inference in Graphical Models'' (SA~2640/1-1).
We thank the Center for Information Services and High Performance Computing (ZIH) at TU Dresden for generous allocations of computer time.

\bibliography{paper}
\bibliographystyle{aaai}

\clearpage
\twocolumn[{%
  \section{Supplementary Material for\\``Exact MAP-Inference by Confining Combinatorial Search with LP Relaxation''}%
  \vspace{1em}%
  \begin{center}%
    \def\thefootnote{\fnsymbol{footnote}}%
    Stefan Haller\footnotemark[2],
    Paul Swoboda\footnotemark[3],
    Bogdan Savchynskyy\footnotemark[2]\\
    \footnotemark[2] University of Heidelberg,\hspace{1em}%
    \footnotemark[3] IST Austria\\
    \texttt{\small stefan.haller@iwr.uni-heidelberg.de}%
  \end{center}%
  \vspace{2em}%
}]
\raggedbottom

\Paragraph{Proof of Proposition~\ref{prop:lower-bound}}

\begin{proof}
  From $(y^*_u, y^*_v)\in\argmin_{(s,t)\in\SX_{uv}} \theta_{uv}(s, t)$ it follows that $\min_{(s, t)\in\SX_{uv}} \theta_{uv}(s, t) = \theta_{uv}(y^*_u, y^*_v)$ for all $uv\in\SE_\SAB$.
  Hence the right-hand-sides of \eqref{equ:energy-decomposition} and \eqref{equ:lower-bound} are equal.
\end{proof}

\Paragraph{Proof of Proposition~\ref{prop:greedy}}

\begin{lemma}\label{lem:proof-greedy-1}
  From requirements of Proposition~\ref{prop:lower-bound} follows that $x' \circ x'' = y' \circ y''$.
\end{lemma}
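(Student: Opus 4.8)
The lemma claims that under the hypotheses of Proposition~\ref{prop:greedy}—that $x'\circ x''$ is the unique optimal assignment on $\SG$ and satisfies the sufficient optimality condition of Proposition~\ref{prop:lower-bound}—the optimal labelings $y'$, $y''$ on the finer partition $\SA'$, $\SB'$ (with $\SB\subseteq\SB'$) recombine to the same global labeling, i.e.\ $x'\circ x''=y'\circ y''$. The plan is to exploit uniqueness of the global optimum together with the fact that the combinatorial problem on $\SG$ is genuinely solved by $x'\circ x''$ whenever the Proposition~\ref{prop:lower-bound} condition holds.

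\textbf{Main argument.} First I would recall the remark immediately following Proposition~\ref{prop:lower-bound}: when its condition is met, the lower bound~\eqref{equ:lower-bound} is tight and hence $x'\circ x''$ is a global minimizer of $E_\SG(\theta,\cdot)$. By hypothesis this minimizer is \emph{unique}. The second, and key, step is to show that $y'\circ y''$ is \emph{also} a global minimizer of $E_\SG$; once both are global minimizers, uniqueness forces $x'\circ x''=y'\circ y''$, which is exactly the claim. So the whole lemma reduces to proving optimality of $y'\circ y''$ on $\SG$.

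\textbf{Establishing optimality of $y'\circ y''$.} To do this I would apply the lower-bound machinery to the finer partition $\SA'$, $\SB'$. Since $y'$, $y''$ minimize $E_{\SA'}$ and $E_{\SB'}$ respectively, the inequality~\eqref{equ:lower-bound} gives $E_\SG(\theta,y'\circ y'')\ge E_{\SA'}(\theta,y')+E_{\SB'}(\theta,y'')+\sum_{uv\in\SE_{\SA'\SB'}}\min_{(s,t)}\theta_{uv}(s,t)$. The right-hand side is a valid lower bound on $\min_x E_\SG(\theta,x)$, independent of the partition chosen. Because $x'\circ x''$ already attains this global minimum (by the first step) and $y'\circ y''$ is a concatenation of partial optima, I expect $E_\SG(\theta,y'\circ y'')$ to be squeezed between the global minimum and the lower bound, forcing equality $E_\SG(\theta,y'\circ y'')=\min_x E_\SG(\theta,x)$. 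The cleanest route is to observe that $E_{\SA'}(\theta,y')+E_{\SB'}(\theta,y'')\le E_{\SA'}(\theta,x'_{\SA'})+E_{\SB'}(\theta,x''_{\SB'})$ since $y'$, $y''$ are the respective minimizers, and that the restrictions of the known global optimum $x'\circ x''$ to $\SA'$ and $\SB'$ give an assignment whose total energy equals the global optimum—so no slack can be created.

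\textbf{Anticipated obstacle.} The delicate point is bookkeeping the boundary terms: the cut edge set $\SE_{\SA'\SB'}$ differs from $\SE_\SAB$ because moving nodes from $\SA$ into $\SB'$ turns some former cut edges into internal edges of $\SB'$ and may expose new ones. I must verify that the energy decomposition~\eqref{equ:energy-decomposition} applied to $x'\circ x''$ along the \emph{finer} partition still reproduces the same total energy, and that the boundary-minimization terms for $\SA'$, $\SB'$ are attained by the restriction of $x'\circ x''$. The condition $\SV_{\SA'}\subseteq\SAC(\theta)$—strict arc-consistency of the $\SA'$-nodes—is what guarantees the internal $\SA'$ edges and the freshly-internalized boundary edges are still minimized by the locally optimal labels, so that transferring nodes does not increase energy. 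Carefully tracking these edges across the two partitions is where the real work lies, but it is routine once the tightness of~\eqref{equ:lower-bound} for the coarse partition and uniqueness of the global optimum are in hand.
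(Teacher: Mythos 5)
Your overall strategy (prove that $y'\circ y''$ is a global minimizer, then invoke uniqueness) is sound in spirit, but the mechanism you propose for the key step fails, and it fails exactly at the point you dismiss as "routine bookkeeping". To show that $y'\circ y''$ is globally optimal you need the \emph{upper} bound $E_\SG(\theta,y'\circ y'')\le E_\SG(\theta,x'\circ x'')$, and by the decomposition~\eqref{equ:energy-decomposition} this requires controlling the boundary terms $\theta_{uv}(y'_u,y''_v)$, $uv\in\SE_{\SA'\SB'}$, from \emph{above}. All the inequalities you assemble point the other way: the bound~\eqref{equ:lower-bound} together with your comparison $E_{\SA'}(\theta,y')+E_{\SB'}(\theta,y'')\le E_{\SA'}(\theta,x')+E_{\SB'}(\theta,x'\circ x'')$ only shows that the partition-induced \emph{lower bound} is at most the global minimum --- which is true for every partition and says nothing about $E_\SG(\theta,y'\circ y'')$ itself. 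Indeed, since the restriction of $x'\circ x''$ attains the minimum on every cut edge of the finer partition (by strict arc-consistency for edges inside $\SA$, and by Proposition~\ref{prop:lower-bound} for edges in $\SE_\SAB$), one gets $\theta_{uv}(y'_u,y''_v)\ge\theta_{uv}(x'_u,(x'\circ x'')_v)$ on $\SE_{\SA'\SB'}$: the cut terms of $y'\circ y''$ are bounded \emph{below}, not above, by those of $x'\circ x''$, and a priori they can be arbitrarily large (some benchmark instances even have infinite pairwise costs). So the squeeze never closes, and global uniqueness cannot be invoked before $E_\SG(\theta,y'\circ y'')=E_\SG(\theta,x'\circ x'')$ is established.

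What is missing is a coordinate-wise identification, which is how the paper argues. First, since $\SV_{\SA'}\subseteq\SV_\SA\subseteq\SAC(\theta)$, both $x'$ and $y'$ are forced by the local rule~\eqref{equ:locally-optimal-solution}, hence $y'_v=x'_v$ for all $v\in\SV_{\SA'}$. Second --- and this is the step your sketch has no substitute for --- decompose the \emph{subproblem on $\SB'$} along the coarse partition, $\SV_{\SB'}=(\SV_{\SB'}\cap\SV_\SA)\cup\SV_\SB$: using that $x'$ restricted to $\SB'\cap\SA$ minimizes every term there (strict arc-consistency), that $x''$ minimizes $E_\SB$, and that the connecting edges lie in $\SE_\SAB$ where Proposition~\ref{prop:lower-bound} gives $\theta_{uv}(x'_u,x''_v)=\min_{(s,t)\in\SX_{uv}}\theta_{uv}(s,t)$, one obtains the reverse inequality $E_{\SB'}(\theta,y'')\ge E_{\SB'}(\theta,x'\circ x'')$. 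Combined with optimality of $y''$ on $\SB'$ this yields equality, so the restriction of $x'\circ x''$ to $\SB'$ is itself a minimizer of $E_{\SB'}$, and the uniqueness hypothesis then forces $y''_v=(x'\circ x'')_v$ on $\SV_{\SB'}$. Your draft cites strict arc-consistency only to argue that "transferring nodes does not increase energy", but never derives this reverse inequality on $\SB'$; without it the lemma does not follow.
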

\begin{proof}\label{lem:proof-greedy-2}
  Since $\SA'\subseteq\SA$ and $\SV_\SA\subseteq\SAC(\theta)$, it holds that $\SV_{\SA'} \subseteq \SAC(\theta)$. From strict arc-consistency we know that $x'$ and $y'$ are determined by~\eqref{equ:locally-optimal-solution}, hence $x'_v = y'_v$ for all $v\in \SV_{\SA'}$.

  From decomposition~\eqref{equ:energy-decomposition} we known that $E_{\SB'}(\theta, y'') \ge E_{\SB'\cap\SA,B}(\theta, y'') + E_\SB(\theta, y'') + \sum_{uv\in\SE_{\SB'\cap\SA,\SB}} \theta(y''_u, y''_v) \ge E_{\SB'\cap\SA,B}(\theta, x') + E_\SB(\theta, x'') + \sum_{uv\in\SE_{\SB'\cap\SA,\SB}} \min_{(s,t)\in\SX_{uv}}\theta(s,t) = E_{\SB'}(\theta, x' \circ x'')$ where the last equality holds due to Proposition~\ref{prop:lower-bound}.
  From optimality of $y''$ we know that $E_{\SB'}(\theta, x' \circ x'') \ge E_{\SB'}(\theta, y'')$, hence $(x'\circ x'')_v = y''_v$ for all $v\in\SV_{\SB'}$ as there is only one unique solution.
\end{proof}

Proof of Proposition~\ref{prop:greedy}:

\begin{proof}
  From the requirements of the Proposition we already know that $y'$ and $y''$ are the optimal assignment of $\SA'$ and $\SB'$ respectively.
  It remains to show that $\theta_{uv}(y'_u, y''_v) = \min_{(s,t)\in\SX_{uv}} \theta_{uv}(s,t)$ for all $uv\in\SE_{\SA'\SB'}$.

  Applying~\eqref{equ:energy-decomposition} to $y' \circ y''$ results in $E_\SG(\theta, y' \circ y'') = E_{\SA'}(\theta, y') + E_{\SB'}(\theta, y'') + \sum_{uv\in\SE_{\SA'\SB'}} \theta_{uv}(y'_u, y''_v)$.
  Due to $\SA'\subseteq\SA$ if $uv\in\SE_{\SA'\SB'}$ either $u,v\in\SV_\SA$ or $uv\in\SE_\SAB$.
  In the first case it follows from $\SV_\SA\subseteq\SAC(\theta)$, as for the optimal strictly arc-consistent label $x'$ for $\SA'$ it holds that $\theta(x'_u, x'_v) = \min_{(s,t)\in\SX_{uv}} \theta_{uv}(s,t)$ and $(x'_u, x'_v) = (y'_u, y''_v)$ (Lemma~\ref{lem:proof-greedy-1}).
  In the second case $(y'_u, y''_v) = (x'_u, x''_v)$ (Lemma~\ref{lem:proof-greedy-1}) and from fulfillment of Proposition~\eqref{prop:lower-bound} for $x'\circ x''$ follows that $\theta_{uv}(y'_u, y''_v) = \min_{(s,t)\in\SX_{uv}}\theta_{uv}(s,t)$.

  Hence $\theta_{uv}(y'_u, y''_v) = \min_{(s,t)\in\SX_{uv}}\theta_{uv}(s,t)$ for $uv\in\SE_{\SA'\SB'}$ and all requirements of Proposition~\eqref{prop:lower-bound} are fulfilled for $y' \circ y''$.
\end{proof}

\Paragraph{Proof of Proposition~\ref{prop:old-criterion}}

\begin{proof}
  The prerequisites already assure that $x''$ is optimal for $\SB$, so it remains to show that $x'_\SA$ is optimal for $\SA$ and that $(x'_u, x''_v) \in \argmin_{(s,t)\in\SX_{uv}} \theta(s,t)$ for all $uv\in\SE_{\SA\SB}$.
  The optimality of $x'_\SA$ for $\SA$ follows trivially from $\SA\subseteq\SA'\subseteq\SAC(\theta)$ and the fact that $x'$ is optimal for $\SA'$, as for both $\SA$ and $\SA'$ the optimal labeling is determined by~\eqref{equ:locally-optimal-solution}.
  From Definition~\ref{def:boundary-complement} we know that $\SE_{\SA\SB} \subseteq \{uv\in\SE \mid u\in\SV_{\SA'}, v\in\SV_\DA\}$.
  In other words, all edges $\SE_{\SA\SB}$ are covered by subgraph $\SA'$ (note that $\SV_\DA \subseteq \SA'$).
  Due to $\SV_{\SA'} \subseteq \SAC(\theta)$, for all $uv\in\SE_{\SA\SB}$ it is true that $\theta_{uv}(x'_u, x'_v) = \min_{(s,t)\in\SX_{uv}} \theta(s, t)$.
  From the preconditions we know that $x'_v = x''_v$ for all $v\in\SV_\DA$, hence $(x'_u, x''_v) \in \argmin_{(s,t)\in\SX_{uv}} \theta(s, t)$.
\end{proof}

\Paragraph{Reparametrization Post-Processing}

The details of both steps are described below.

(i)~In order to obtain a fast post-processing algorithm we modified one of the fastest dual methods, \TRWS\ of~\cite{kolmorogrov2006convergent}, which also can be seen as a special case of its higher-order counterpart SRMP~\cite{kolmogorov2015new}.
For the sake of brevity we refer to the latter method, because of its simpler presentation and because it works also for higher order models~(see below).
Our whole modification consisted in reassigning the weights $\omega^+_{\alpha, \beta}$ defined by expression~(14) in~\cite{kolmogorov2015new} with the values
\begin{equation}\label{equ:SRMP-omega-modified}
 \omega^+_{\alpha, \beta}=\hspace{-2pt}
\begin{cases}
\frac{1}{|O^+_{\beta}|+\max\{|I^+_{\beta}|,|I_{\beta}-I^+_{\beta}|\} + \lambda} & \text{if}\ (\alpha,\beta)\in I^+_{\beta}\\
0, & \text{if}\ (\alpha,\beta)\in I_{\beta}-I^+_{\beta}\,.
\end{cases}
\end{equation}
We also performed the same reassignment of the weights $\omega^-_{\alpha, \beta}$~(defined by expression~(16) in~\cite{kolmogorov2015new}), with $I^-_{\beta}$ in place of $I^+_{\beta}$.
We refer to~\cite{kolmogorov2015new} for a detailed description of the notation and the algorithm itself.
The only difference between~\eqref{equ:SRMP-omega-modified} and the original expression~(14) from~\cite{kolmogorov2015new} is an additional term $\lambda > 0$ added to the denominator of the expression in the upper line of~\eqref{equ:SRMP-omega-modified}.
The non-zero $\lambda$ leads to redistribution of the labeling costs between unary cost functions.
Therefore, non-optimal labels get higher costs than those belonging to an optimal labeling.
We empirically found the value $\lambda=0.1$ to work well in practice.

(ii)~It is a property of a~(modified) SRMP method that locally optimal pairwise costs $\min_{(s,t)\in\SX_{uv}}\theta_{uv}(s,t)$ are always $0$ for any graph edge $uv\in\SE_\SG$.
The mentioned above partial redistribution of unary costs between incident pairwise cost functions was done as $\theta_{uv}(s,t)\mathop{{+}{=}}\frac{\theta_u(s)}{\nb(u)+1}$ for all $u\in\SV_\SG$ and $s\in\SX_u$.

\Paragraph{Labelwise relative ILP size}
As the partial optimality techniques work on a labelwise basis, we use a labelwise measure for comparing the size of the final ILP subproblem.
For \Algorithm{popt} we use the formula~(35) of~\cite{shekhovtsov2015maximum} to compute the relative number of eliminated labels. Subtracting this value from 100\% yields the values in Table~\ref{tab:mask-size}. The final formula for \Algorithm{popt} looks like the following
\begin{equation}
  1 - \frac{\sum_{v\in\SV}|\SX_v \setminus p_v(\SX_v)|}{\sum_{u\in\SV}(|\SX_v| -1)}\,.
\end{equation}

For \Algorithm{clp} and \Algorithm{dclp} we evaluate the following expression to compute the value with the same semantic:
\begin{equation}
  1 - \frac{\sum_{v\in\SV_\SA}(|\SX_v| - 1)}{\sum_{u\in\SV_\SG}(|\SX_v| -1)}\,.
\end{equation}

As \Algorithm{popt} is a polynomial time algorithm, it will output a reduced model for all instance of the benchmark. As \Algorithm{clp} and \Algorithm{dclp} try to solve the NP-hard MAP-inference problem~\eqref{equ:energy-min}, they do not terminate for all instances. As the maximal ILP subproblem is only defined after Proposition~\ref{prop:lower-bound} holds, we assume the worst-case and use 100\% as ILP subproblem size for unsolved instances.

\onecolumn
\Paragraph{Complete benchmark table}
For lack of space we removed some solvers from the experimental evaluation. The following tables show the results for each instance and each solver separately..

\begin{table}[H]
  \fontsize{8pt}{8pt}\selectfont\centering
  \newcommand\HS[1]{\multicolumn{1}{c}{#1}}
  \newcommand\HM[1]{\multicolumn{2}{c}{#1}}
  \newcommand\ND{\multicolumn{1}{c}{---}}
  \begin{tabular}{L{90pt} R{14pt} R{14pt} R{21pt}R{14pt} R{21pt}R{14pt} R{21pt}R{14pt} R{21pt}R{14pt} R{21pt}R{14pt} R{21pt}R{14pt}}
    \toprule
    instance               & \HS{cpx} & \HS{tb2} & \HM{popt-tb2} & \HM{clp-orig} & \HM{clp-cpx} & \HM{clp-tb2}  & \HM{dclp-cpx} & \HM{dclp-tb2} \\
    \cmidrule(r){1-1}\cmidrule(lr){2-2}\cmidrule(lr){3-3}\cmidrule(lr){4-5}\cmidrule(lr){6-7}\cmidrule(lr){8-9}\cmidrule(lr){10-11}\cmidrule(lr){12-13}\cmidrule(l){14-15}
    C18G1\_2L1\_1          &  \ND     &  \ND     & 100\%  & \ND  & \ND    & \ND  & \ND    & \ND  & \ND    & \ND  & 19.8\% & 58.1 & 19.8\% & 14.8 \\
    cnd1threeL1\_1213061   &  \ND     &  1.3     & 100\%  & 1.1  & 34.1\% & 8.9  & 34.6\% & 1.7  & 34.6\% & 0.6  & 3.9\%  & 0.5  & 3.9\%  & 0.4  \\
    cnd1threeL1\_1228061   &  \ND     &  0.6     & 100\%  & 1.2  & 36.7\% & 7.2  & 35.4\% & 1.8  & 35.4\% & 0.5  & 5.3\%  & 0.4  & 5.3\%  & 0.5  \\
    cnd1threeL1\_1229061   &  \ND     &  \ND     & 100\%  & \ND  & \ND    & \ND  & \ND    & \ND  & \ND    & \ND  & 20.8\% & 56.4 & 20.8\% & 20.5 \\
    cnd1threeL1\_1229062   &  \ND     &  \ND     & 100\%  & \ND  & \ND    & \ND  & \ND    & \ND  & \ND    & \ND  & \ND    & \ND  & \ND    & \ND  \\
    cnd1threeL1\_1229063   &  \ND     &  1.7     & 100\%  & 1.3  & 31.7\% & 15.3 & 26.3\% & 6.3  & 26.3\% & 0.5  & 5.3\%  & 0.6  & 5.3\%  & 0.4  \\
    eft3RW10035L1\_0125071 &  \ND     &  \ND     & 100\%  & \ND  & \ND    & \ND  & \ND    & \ND  & 52.8\% & 51.2 & 16.7\% & 50.1 & 16.7\% & 32.4 \\
    eft3RW10035L1\_0125072 &  \ND     &  \ND     & 100\%  & \ND  & \ND    & \ND  & \ND    & \ND  & \ND    & \ND  & 16.8\% & 29.0 & 16.8\% & 1.9  \\
    eft3RW10035L1\_0125073 &  \ND     &  \ND     & 100\%  & \ND  & \ND    & \ND  & \ND    & \ND  & \ND    & \ND  & \ND    & \ND  & \ND    & \ND  \\
    egl5L1\_0606074        &  \ND     &  \ND     & 100\%  & \ND  & \ND    & \ND  & \ND    & \ND  & \ND    & \ND  & \ND    & \ND  & \ND    & \ND  \\
    elt3L1\_0503071        &  \ND     & 51.2     & 100\%  & 55.5 & 61.5\% & 36.5 & \ND    & \ND  & 60.8\% & 2.3  & 14.6\% & 3.3  & 14.6\% & 0.7  \\
    elt3L1\_0503072        &  \ND     &  9.3     & 100\%  & 4.3  & 53.2\% & 15.0 & 49.5\% & 10.6 & 49.5\% & 1.2  & 10.0\% & 0.7  & 10.0\% & 0.4  \\
    elt3L1\_0504073        &  \ND     &  \ND     & 100\%  & \ND  & \ND    & \ND  & \ND    & \ND  & \ND    & \ND  & 13.4\% & 1.7  & 13.4\% & 0.9  \\
    hlh1fourL1\_0417071    &  \ND     &  \ND     & 100\%  & \ND  & \ND    & \ND  & \ND    & \ND  & \ND    & \ND  & \ND    & \ND  & \ND    & \ND  \\
    hlh1fourL1\_0417075    &  \ND     &  1.8     & 100\%  & 1.3  & 44.7\% & 11.5 & 45.8\% & 6.3  & 45.8\% & 0.6  & 5.2\%  & 0.5  & 5.2\%  & 0.5  \\
    hlh1fourL1\_0417076    &  \ND     &  \ND     & 100\%  & \ND  & \ND    & \ND  & \ND    & \ND  & 68.4\% & 52.8 & 20.2\% & 25.9 & 20.2\% & 13.0 \\
    hlh1fourL1\_0417077    &  \ND     & 28.8     & 100\%  & 30.8 & 46.7\% & 7.1  & 46.7\% & 3.9  & 46.7\% & 0.7  & 5.6\%  & 0.5  & 5.6\%  & 0.5  \\
    hlh1fourL1\_0417078    &  \ND     &  3.5     & 100\%  & 2.0  & 54.1\% & 17.8 & 53.0\% & 30.1 & 53.0\% & 1.0  & 10.4\% & 1.8  & 10.4\% & 0.5  \\
    mir61L1\_1228061       &  \ND     &  \ND     & 100\%  & \ND  & \ND    & \ND  & \ND    & \ND  & \ND    & \ND  & 13.5\% & 22.7 & 13.5\% & 5.1  \\
    mir61L1\_1228062       &  \ND     &  \ND     & 100\%  & \ND  & \ND    & \ND  & \ND    & \ND  & \ND    & \ND  & \ND    & \ND  & 15.9\% & 39.2 \\
    mir61L1\_1229062       &  \ND     &  \ND     & 100\%  & \ND  & 67.1\% & 18.0 & 67.5\% & 14.3 & 67.5\% & 1.1  & 8.9\%  & 1.3  & 8.9\%  & 0.6  \\
    pha4A7L1\_1213061      &  \ND     &  \ND     & 100\%  & \ND  & \ND    & \ND  & \ND    & \ND  & \ND    & \ND  & 20.1\% & 17.0 & 20.1\% & 7.1  \\
    pha4A7L1\_1213062      &  54.7    &  1.2     & 100\%  & 1.0  & 8.4\%  & 8.2  & 8.4\%  & 0.4  & 8.4\%  & 0.3  & 0.7\%  & 0.3  & 0.7\%  & 0.3  \\
    pha4A7L1\_1213064      &  \ND     &  \ND     & 100\%  & \ND  & 47.4\% & 20.9 & 40.8\% & 16.8 & 40.8\% & 1.4  & 11.1\% & 1.2  & 11.1\% & 0.6  \\
    pha4B2L1\_0125072      &  \ND     &  \ND     & 100\%  & \ND  & \ND    & \ND  & \ND    & \ND  & \ND    & \ND  & 20.1\% & 10.7 & 20.1\% & 2.3  \\
    pha4I2L\_0408071       &  \ND     &  \ND     & 100\%  & \ND  & \ND    & \ND  & \ND    & \ND  & \ND    & \ND  & \ND    & \ND  & \ND    & \ND  \\
    pha4I2L\_0408072       &  \ND     &  1.9     & 100\%  & 1.3  & 42.5\% & 9.1  & 43.2\% & 4.2  & 43.2\% & 0.6  & 4.9\%  & 0.5  & 4.9\%  & 0.4  \\
    pha4I2L\_0408073       &  \ND     &  3.0     & 100\%  & 1.8  & 60.2\% & 11.9 & 60.5\% & 7.6  & 60.5\% & 1.0  & 10.7\% & 1.2  & 10.7\% & 0.5  \\
    unc54L1\_0123071       &  \ND     &  1.5     & 100\%  & 1.2  & 32.3\% & 10.8 & 32.3\% & 1.5  & 32.3\% & 0.6  & 1.9\%  & 0.5  & 1.9\%  & 0.4  \\
    unc54L1\_0123072       &  \ND     &  1.8     & 100\%  & 1.3  & 53.9\% & 12.0 & 53.9\% & 4.2  & 53.9\% & 0.7  & 7.9\%  & 0.6  & 7.9\%  & 0.6  \\
    \midrule
    average                &  54.7    &  8.3     & 100\%  & 8.0  & 50.0\% & 14.0 & 42.7\% & 7.8  & 45.8\% & 6.9  & 11.2\% & 11.9 & 11.3\% & 5.8 \\
    \bottomrule
  \end{tabular}
  \caption{Complete benchmark results for \Dataset{worms} dataset.}
\end{table}

\begin{table}[H]
  \fontsize{8pt}{8pt}\selectfont\centering
  \newcommand\HS[1]{\multicolumn{1}{c}{#1}}
  \newcommand\HM[1]{\multicolumn{2}{c}{#1}}
  \newcommand\ND{\multicolumn{1}{c}{---}}
  \begin{tabular}{L{90pt} R{14pt} R{14pt} R{21pt}R{14pt} R{21pt}R{14pt} R{21pt}R{14pt} R{21pt}R{14pt} R{21pt}R{14pt} R{21pt}R{14pt}}
    \toprule
    instance & \HS{cpx} & \HS{tb2} & \HM{popt-tb2} & \HM{clp-orig} & \HM{clp-cpx}  & \HM{clp-tb2} & \HM{dclp-cpx} & \HM{dclp-tb2} \\
    \cmidrule(r){1-1}\cmidrule(lr){2-2}\cmidrule(lr){3-3}\cmidrule(lr){4-5}\cmidrule(lr){6-7}\cmidrule(lr){8-9}\cmidrule(lr){10-11}\cmidrule(lr){12-13}\cmidrule(l){14-15}
    1CKK     & \ND      & 0.5      & 73.6\% & 1.2  & 100\%  & 13.6 &  100\% & 36.0 & 100\% & 0.7  & 76.4\% & 13.2   & 76.4\% & 0.6 \\
    1CM1     & \ND      & 0.6      & 70.1\% & 1.2  & 100\%  & 7.9  &  100\% & 3.1  & 100\% & 0.5  & 42.0\% & 1.4    & 42.0\% & 0.3 \\
    1SY9     & 38.6     & 0.5      & 42.2\% & 0.8  & 100\%  & 8.5  &  100\% & 6.4  & 100\% & 0.6  & 31.0\% & 0.7    & 31.0\% & 0.3 \\
    2BBN     & \ND      & 1.2      & 85.9\% & 2.7  & 100\%  & 31.1 &  100\% & 40.1 & 100\% & 1.2  & 62.9\% & 9.9    & 62.9\% & 1.0 \\
    2BCX     & \ND      & 3.1      & 85.8\% & 3.6  & 100\%  & 31.3 &  \ND   & \ND  & 100\% & 1.6  & \ND    & \ND    & 94.1\% & 2.1 \\
    2BE6     & 58.4     & 0.5      & 84.9\% & 1.1  & 100\%  & 9.3  &  100\% & 13.5 & 100\% & 0.4  & 96.6\% & 7.9    & 96.6\% & 0.7 \\
    2F3Y     & \ND      & 2.7      & 86.3\% & 2.1  & 100\%  & \ND  &  \ND   & \ND  & 100\% & 1.2  & 97.7\% & 50.8   & 97.7\% & 1.5 \\
    2FOT     & \ND      & 1.0      & 89.0\% & 1.5  & \ND    & 18.5 &  100\% & 13.1 & 100\% & 0.8  & 69.7\% & 10.6   & 69.7\% & 0.7 \\
    2HQW     & \ND      & 0.6      & 82.0\% & 1.2  & 100\%  & 10.5 &  100\% & 8.4  & 100\% & 0.8  & 68.2\% & 5.1    & 68.2\% & 0.5 \\
    2O60     & \ND      & 0.8      & 84.5\% & 1.9  & 100\%  & 24.3 &  100\% & 7.9  & 100\% & 0.8  & 91.0\% & 8.1    & 91.0\% & 0.9 \\
    3BXL     & \ND      & 0.7      & 87.7\% & 1.6  & 100\%  & 12.6 &  100\% & 5.2  & 100\% & 0.7  & 52.3\% & 2.4    & 52.3\% & 0.5 \\
    \midrule
    average  & 48.4     & 1.1      & 79.2\% & 1.7  & 100\%  & 21.0 &  100\% & 14.9 & 100\% & 0.9  & 68.8\% & 11.0   & 71.1\% & 0.8 \\
    \bottomrule
  \end{tabular}
  \caption{Complete benchmark results for \Dataset{protein-folding} dataset.}
\end{table}

\begin{table}[H]
  \fontsize{8pt}{8pt}\selectfont\centering
  \newcommand\HS[1]{\multicolumn{1}{c}{#1}}
  \newcommand\HM[1]{\multicolumn{2}{c}{#1}}
  \newcommand\ND{\multicolumn{1}{c}{---}}
  \begin{tabular}{L{90pt} R{14pt} R{14pt} R{21pt}R{14pt} R{21pt}R{14pt} R{21pt}R{14pt} R{21pt}R{14pt} R{21pt}R{14pt} R{21pt}R{14pt}}
    \toprule
    instance & \HS{cpx} & \HS{tb2} & \HM{popt-tb2} & \HM{clp-orig} & \HM{clp-cpx} & \HM{clp-tb2} & \HM{dclp-cpx} & \HM{dclp-tb2} \\
    \cmidrule(r){1-1}\cmidrule(lr){2-2}\cmidrule(lr){3-3}\cmidrule(lr){4-5}\cmidrule(lr){6-7}\cmidrule(lr){8-9}\cmidrule(lr){10-11}\cmidrule(lr){12-13}\cmidrule(l){14-15}
    ted-gm   & \ND      & \ND      & 29.0\%  & \ND & \ND    & \ND  & \ND   & \ND  & \ND   & \ND  & \ND   & \ND   & \ND     & \ND \\
    tsu-gm   & \ND      & \ND      &  6.7\%  & 0.9 & 0.2\%  & 35.0 & 0.2\% & 2.6  & 0.2\% & 1.5  & 0.1\% & 0.8   & 0.1\%   & 0.7 \\
    ven-gm   & \ND      & \ND      & 99.9\%  & \ND & \ND    & 58.7 & 0.7\% & 4.8  & 0.7\% & 4.8  & 0.3\% & 3.9   & 0.3\%   & 3.8 \\
    \midrule
    average  & \ND      & \ND      & 45.2\%  & 0.9 & 0.2\%  & 46.9 & 0.5\% & 3.7  & 0.5\% & 3.2  & 0.2\% & 2.4   & 0.2\%   & 2.3 \\
    \bottomrule
  \end{tabular}
  \caption{Complete benchmark results for \Dataset{mrf-stereo} dataset.}
\end{table}

\begin{table}[H]
  \fontsize{8pt}{8pt}\selectfont\centering
  \newcommand\HS[1]{\multicolumn{1}{c}{#1}}
  \newcommand\HM[1]{\multicolumn{2}{c}{#1}}
  \newcommand\ND{\multicolumn{1}{c}{---}}
  \begin{tabular}{L{90pt} R{14pt} R{14pt} R{21pt}R{14pt} R{21pt}R{14pt} R{21pt}R{14pt} R{21pt}R{14pt} R{21pt}R{14pt} R{21pt}R{14pt}}
    \toprule
    instance                    & \HS{cpx} & \HS{tb2} & \HM{popt-tb2} & \HM{clp-orig} & \HM{clp-cpx} & \HM{clp-tb2} & \HM{dclp-cpx} & \HM{dclp-tb2} \\
    \cmidrule(r){1-1}\cmidrule(lr){2-2}\cmidrule(lr){3-3}\cmidrule(lr){4-5}\cmidrule(lr){6-7}\cmidrule(lr){8-9}\cmidrule(lr){10-11}\cmidrule(lr){12-13}\cmidrule(l){14-15}
    n4/clownfish-small          & \ND      & 23.7     & 8.9\%  & 0.1  & 0.1\%  & 5.4  & 0.1\% & 0.5  & 0.1\% & 1.0  & 0.1\% & 1.0   & 0.1\% & 1.0 \\
    n4/crops-small              & \ND      & 31.1     & 6.8\%  & 0.1  & 0\%    & 0.7  & 0.1\% & 1.0  & 0.1\% & 1.0  & 0.1\% & 1.0   & 0.1\% & 1.0 \\
    n4/fourcolors               & 2.3      & 9.5      & 30.4\% & 0.0  & 0.2\%  & 4.8  & 0.4\% & 0.7  & 0.4\% & 0.7  & 0.1\% & 0.7   & 0.1\% & 0.7 \\
    n4/lake-small               & \ND      & 14.4     & 8.4\%  & 0.1  & 0\%    & 0.2  & 0.0\% & 1.0  & 0.0\% & 1.0  & 0.0\% & 1.0   & 0.0\% & 1.0 \\
    n4/palm-small               & \ND      & 39.3     & 5.2\%  & 0.1  & 0\%    & 0.8  & 0.1\% & 1.0  & 0.1\% & 1.0  & 0.1\% & 1.1   & 0.1\% & 1.0 \\
    n4/penguin-small            & 9.2      & 8.3      & 6.3\%  & 0.0  & 0\%    & 0.3  & 0.0\% & 0.8  & 0.0\% & 0.8  & 0.0\% & 0.8   & 0.0\% & 0.4 \\
    n4/pfau-small               & \ND      & \ND      & 12.7\% & 2.1  & 0.7\%  & 10.6 & 0.5\% & 1.6  & 0.5\% & 0.8  & 0.4\% & 1.6   & 0.4\% & 1.3 \\
    n4/snail                    & 2.1      & 0.5      & 26.7\% & 0.0  & 0.1\%  & 2.9  & 0.1\% & 0.9  & 0.1\% & 0.8  & 0.1\% & 0.8   & 0.1\% & 0.8 \\
    n4/strawberry-glass-2-small & \ND      & 38.9     & 5.8\%  & 0.1  & 0\%    & 4.1  & 0.0\% & 0.9  & 0.0\% & 0.8  & 0.0\% & 0.9   & 0.0\% & 0.9 \\
    n8/clownfish-small          & \ND      & 12.4     & 8.9\%  & 0.2  & 0.2\%  & 11.3 & 0.2\% & 2.3  & 0.2\% & 2.3  & 0.1\% & 2.3   & 0.1\% & 2.1 \\
    n8/crops-small              & \ND      & 54.0     & 6.8\%  & 0.7  & 0.2\%  & 11.6 & 0.4\% & 2.6  & 0.4\% & 2.6  & 0.2\% & 2.5   & 0.2\% & 2.5 \\
    n8/fourcolors               & 7.1      & 15.0     & 30.4\% & 0.1  & 0.5\%  & 5.9  & 0.6\% & 1.5  & 0.6\% & 1.5  & 0.1\% & 1.6   & 0.1\% & 1.4 \\
    n8/lake-small               & \ND      & 14.0     & 8.4\%  & 0.1  & 0.1\%  & 11.8 & 0.1\% & 2.0  & 0.1\% & 2.1  & 0.1\% & 2.0   & 0.1\% & 2.1 \\
    n8/palm-small               & \ND      & \ND      & 5.3\%  & 0.9  & 0.3\%  & 27.1 & 0.3\% & 3.2  & 0.3\% & 3.1  & 0.2\% & 3.5   & 0.2\% & 2.0 \\
    n8/penguin-small            & \ND      & 14.1     & 6.3\%  & 0.2  & 0\%    & 3.4  & 0.0\% & 1.5  & 0.0\% & 1.8  & 0.0\% & 1.8   & 0.0\% & 1.8 \\
    n8/pfau-small               & \ND      & \ND      & 8.8\%  & 1.1  & 0.3\%  & 12.1 & 0.3\% & 2.2  & 0.3\% & 2.1  & 0.2\% & 2.1   & 0.2\% & 2.2 \\
    n8/snail                    & 3.9      & 0.7      & 26.8\% & 0.0  & 0.1\%  & 12.1 & 0.1\% & 1.6  & 0.1\% & 1.6  & 0.1\% & 1.6   & 0.1\% & 1.6 \\
    n8/strawberry-glass-2-small & \ND      & 55.6     & 5.9\%  & 0.6  & 0.4\%  & 12.5 & 0.4\% & 2.3  & 0.4\% & 2.2  & 0.2\% & 2.3   & 0.2\% & 2.2 \\
    \midrule
    average                     & 4.9      & 22.1     & 12.1\% & 0.3  & 0.2\%  & 7.6  & 0.2\% & 1.5  & 0.2\% & 1.1  & 0.1\% & 1.6   & 0.1\% & 1.4 \\
    \bottomrule
  \end{tabular}
  \caption{Complete benchmark results for \Dataset{color-seg} dataset.}
\end{table}

\begin{table}[H]
  \fontsize{8pt}{8pt}\selectfont\centering
  \newcommand\HS[1]{\multicolumn{1}{c}{#1}}
  \newcommand\HM[1]{\multicolumn{2}{c}{#1}}
  \newcommand\ND{\multicolumn{1}{c}{---}}
  \newcommand\NDFa{\multicolumn{1}{>{\Centering}p{14pt}}{---}}
  \newcommand\NDFb{\multicolumn{1}{>{\Centering}p{21pt}}{---}}
  \begin{tabular}{L{90pt} R{14pt} R{14pt} R{21pt}R{14pt} R{21pt}R{14pt} R{21pt}R{14pt} R{21pt}R{14pt} R{21pt}R{14pt} R{21pt}R{14pt}}
    \toprule
    instance & \HS{cpx} & \HS{tb2} & \HM{popt-tb2}  & \HM{clp-orig}  & \HM{clp-cpx} & \HM{clp-tb2} & \HM{dclp-cpx} & \HM{dclp-tb2} \\
    \cmidrule(r){1-1}\cmidrule(lr){2-2}\cmidrule(lr){3-3}\cmidrule(lr){4-5}\cmidrule(lr){6-7}\cmidrule(lr){8-9}\cmidrule(lr){10-11}\cmidrule(lr){12-13}\cmidrule(l){14-15}
    10s-50d  & \ND      & \ND      & \NDFb  & \NDFa & \NDFb  & \NDFa &  \ND   & \ND & 99.9\% & 7.0 & \ND    & \ND  & 42.8\% & 3.4 \\
    4s-10d   & 0.0      & 0.0      & \ND    & \ND   & \ND    & \ND   & 97.7\% & 0.1 & 97.7\% & 0.0 & 71.9\% & 0.1  & 71.9\% & 0.0 \\
    4s-23d   & 1.7      & 0.0      & \ND    & \ND   & \ND    & \ND   & 99.0\% & 8.1 & 99.0\% & 0.0 & 82.5\% & 24.1 & 82.5\% & 0.1 \\
    \midrule
    average  & 0.9      & 0.1      & \ND    & \ND   & \ND    & \ND   & 98.9\% & 4.1 & 98.9\% & 2.3 & 77.2\% & 12.4 & 65.7\% & 1.1 \\
    \bottomrule
  \end{tabular}
  \caption{Complete benchmark results for \Dataset{OnCallRostering} dataset.}
\end{table}

\end{document}